\documentclass[11pt]{article}

\usepackage[margin=1in]{geometry}

\newcommand{\eg}{\emph{e.g.}\@\xspace}

\usepackage{graphicx}
\usepackage{booktabs}

\usepackage{amsmath,amssymb,amsthm,mathtools}
\usepackage{xspace}
\usepackage{xcolor}
\usepackage{hyperref}
\usepackage[capitalize]{cleveref}   
\usepackage{enumitem}
\usepackage{microtype}
\usepackage{wrapfig} 
\usepackage{caption}

\newtheorem{theorem}{Theorem}

\newtheorem{proposition}{Proposition}

\newcommand{\R}{\mathbb{R}}
\newcommand{\SO}{\mathrm{SO}}
\newcommand{\SE}{\mathrm{SE}}

\newcommand{\Sym}{\mathrm{Sym}}
\newcommand{\SPD}{\mathrm{SPD}}

\newcommand{\tr}{\mathrm{tr}}
\newcommand{\Ad}{\mathrm{Ad}}

\newcommand{\End}{\operatorname{End}}

\newcommand{\Hom}{\operatorname{Hom}}

\newcommand{\hatop}[1]{[#1]_\times}

\begin{document}

\title{\textbf{E3DGS: Unified Geometric--Photometric Equivariance for\\ 3D Gaussian Splatting via Color-as-Geometry Embedding}}

\author{
Chankyo Kim \quad Maani Ghaffari \\[2pt]
University of Michigan, Ann Arbor, MI, USA \\
\texttt{\{chankyo, maanigj\}@umich.edu}
}
\date{}
\maketitle

\begin{abstract}
3D Gaussian Splatting (3DGS) captures scenes by coupling explicit geometry (position, covariance) with view-dependent photometry (Spherical Harmonics). However, building $\mathrm{SE}(3)$-equivariant architectures on these primitives presents a fundamental representation bottleneck. Color has been treated as a signal rather than a geometric entity, making it nontrivial to unify symmetry across geometry and appearance as the camera frame changes. While translations are handled by relative coordinates, rotations act heterogeneously across attributes:
$\mu\mapsto R\mu$, $\Sigma\mapsto R\Sigma R^\top$, and $f_\ell\mapsto D^\ell(R)f_\ell$. This mismatch complicates strict equivariance, leading existing methods to either discard or flatten SH coefficients, thereby breaking symmetry.

We propose a unified solution rooted in representation theory: for SH degrees $\ell \le 2$, photometry is algebraically isomorphic to a rank-2 geometric tensor. We prove that the Wigner-$D$ action on these SH coefficients can be exactly reformulated as the conjugation action on $3 \times 3$ matrices. Leveraging this, we introduce the Unified Matrix Embedding, a lifting that maps all Gaussian attributes into a unified carrier space, $\mathfrak{gl}(3)$.

Building on ``Color-as-Geometry'' formulation, we present \textbf{E3DGS}, a rigid-body ($\mathrm{SE}(3)$) equivariant architecture that processes 3D Gaussians without Clebsch--Gordan tensor products. Evaluations on object vision and action-conditioned Gaussian world modeling demonstrate that our unified approach yields strong robustness under camera-frame changes and improved data efficiency.

\end{abstract}


\begin{figure*}[t]
\centering
\includegraphics[width=\textwidth]{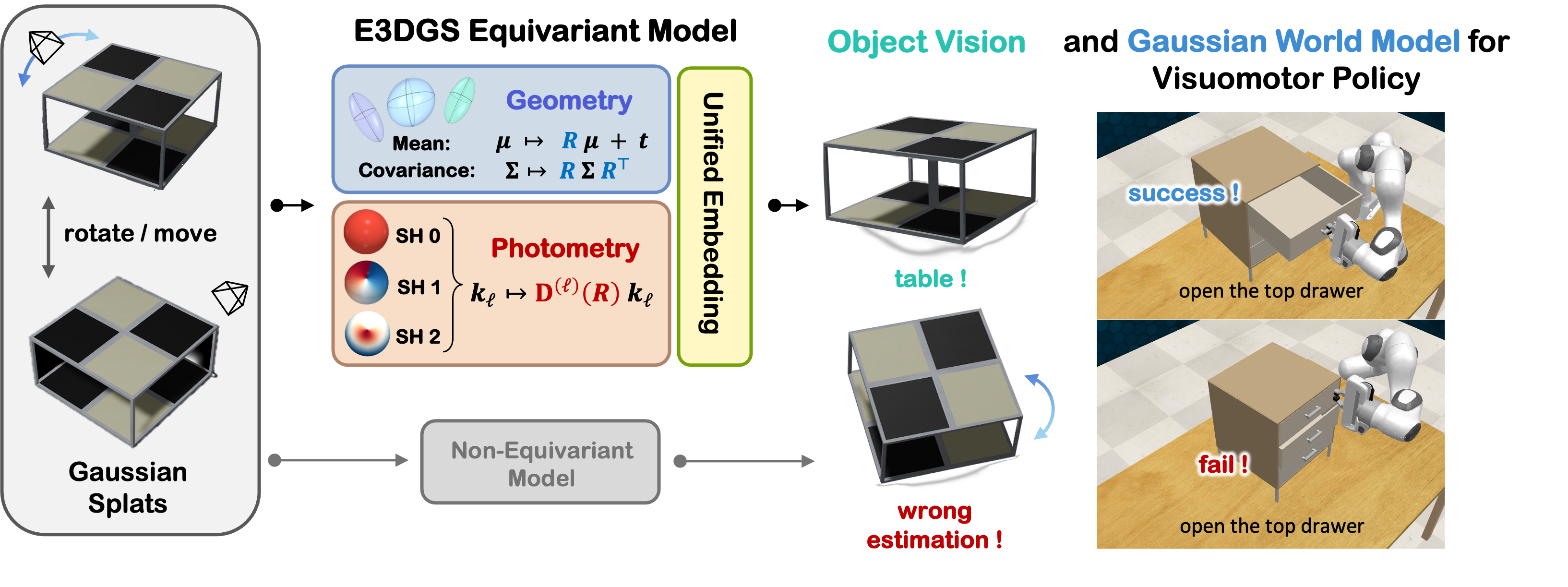}
\caption{\textbf{Overview of E3DGS.}
3D Gaussian primitives couple geometry (mean/covariance) and view-dependent photometry (SH), which transform under camera-frame changes by heterogeneous $\SO(3)$ rules.
E3DGS lifts both into a unified matrix carrier and processes them with a rigid-body equivariant backbone, enabling robust object recognition and action-conditioned Gaussian world modeling.}
\label{fig:fig1}
\end{figure*}

\section{Introduction}

The physical world is not merely a collection of shapes; it is an interplay of geometry and light. 3D Gaussian Splatting (3DGS)~\cite{kerbl2023gaussiansplatting} explicitly models this duality by assigning Spherical Harmonic (SH) coefficients to each primitive to capture view-dependent appearance. Yet, in the context of deep representation learning, this rich photometric information is often discarded. Current state-of-the-art methods (\eg, \cite{ma2024shapesplat, li2025scenesplat, lu2024manigaussian, ye2023gaussian}) strip away view-dependent SH coefficients in their 3DGS-based learning frameworks to avoid the complexities of rotational symmetries, reducing a glossy, textured scene to a matte, colored point cloud. This architectural compromise inherently limits the model's capacity to distinguish objects with identical geometries but distinct material properties (\eg, a matte plastic cup versus a shiny metallic mug). 

This simplification stems from a fundamental bottleneck in 3D deep learning: strict $\mathrm{SE}(3)$ equivariance. While translation equivariance can be elegantly achieved using relative coordinates, rotation poses a severe challenge. Geometric attributes like position ($\mathbf{x}$) and covariance ($\mathbf{\Sigma}$) admit standard tensor transformation rules under rotation $R \in \mathrm{SO}(3)$, whereas SH coefficients transform via \mbox{Wigner-$D$} matrices. Integrating these disparate data types into a single equivariant pipeline typically requires heavy machinery, such as Tensor Field Networks (TFNs)~\cite{thomas2018tfn}. For scenes parameterized by millions of Gaussian primitives, relying on Clebsch--Gordan tensor products to process these mixed representations becomes computationally prohibitive~\cite{fuchs2020se3transformer}. Consequently, existing 3DGS learning frameworks lack a unified equivariant formulation.

To achieve rotation robustness, prior works either rely on extensive data augmentation or bypass the symmetry problem entirely by discarding all view-dependent SH coefficients (degrees $\ell \ge 1$). By reducing appearance solely to the 0-th order SH, they treat photometry as a static scalar signal. This simplification fundamentally fails to model the view-dependent complexities of the real world. 

To resolve this representation mismatch, we propose a paradigm shift: \textbf{treating photometry as geometry}. For spherical harmonics of degrees $\ell \le 2$, the coefficient spaces $\mathbb{R}^{2\ell+1}$ realize the same irreducible representations ($V_0 \oplus V_1 \oplus V_2$) that appear in the $3 \times 3$ matrix space $\mathfrak{gl}(3)$ under conjugation. We construct linear intertwiners $\Phi_\ell$ that map SH coefficients directly into these corresponding matrix subspaces, replacing the Wigner-$D$ action with the simple conjugation rule that governs geometric tensors. By lifting each Gaussian primitive into a stack of $3 \times 3$ matrix channels, all rotation-active attributes—both geometric and photometric—transform via a single, unified operation: $M \mapsto RMR^\top$. \cref{fig:fig1} summarizes this unification and the resulting applications to object recognition and action-conditioned world modeling.

Building upon this \textit{Color-as-Geometry} formulation, we introduce \textbf{E3DGS}.
To the best of our knowledge, this is the \textbf{first formulation that makes the view-dependent appearance of 3DGS an equivariant geometric object}, allowing geometry and photometry to transform under a single conjugation law. Our core contributions are:
\begin{enumerate}[leftmargin=*]
    \item \textbf{Color-as-Geometry.} We show that view-dependent photometry is itself an equivariant geometric object: an explicit intertwiner identifies the SH$(0$--$2)$ coefficients with the conjugation carrier $\mathfrak{gl}(3)$, so that appearance and geometry of a 3D Gaussian transform by the single rule $M\mapsto RMR^\top$. This is, to our knowledge, the first time view-dependent appearance is treated as an equivariant tensor for learning on 3DGS.

    \item \textbf{A complete family of matrix carriers.} We prove that every spherical-harmonic degree can be represented exactly as matrix conjugation on a carrier $\End(V_k)$, and we identify which degrees each carrier covers. The $3\times3$ matrix used in our model is the smallest such carrier and covers view-dependent appearance up to degree $2$; higher degrees are handled by larger carriers in the same family.

    \item \textbf{Adjoint-equivariant transformer for 3DGS.} We instantiate a matrix-native rigid-body ($\SE(3)$) equivariant encoder with $\Ad$-equivariant operations (\textbf{ReLN-Attention}, \textbf{ReLN-LayerNorm}) whose pairwise interaction reduces to an $\SO(3)$-invariant scalar contraction with $O(1)$ per-pair cost, avoiding the Clebsch--Gordan tensor products whose cost grows steeply with representation degree.

    \item \textbf{Validation on 3DGS benchmarks.} We instantiate the encoder as an Equivariant Gaussian MAE and as an action-conditioned Gaussian world model, and evaluate on object-level recognition and action-conditioned manipulation, demonstrating rotation robustness and label efficiency across camera-frame changes.
\end{enumerate}

\section{Related Work}

\subsection{3D Gaussian Splatting and the Representation Mismatch}
3D Gaussian Splatting (3DGS) parameterizes scenes using anisotropic Gaussians with explicit geometry (means, covariances) and view-dependent photometry (Spherical Harmonics, SH) \cite{kerbl2023gaussiansplatting,mildenhall2020nerf}. Recent feed-forward models predict these primitives directly from sparse observations, establishing 3DGS as a reusable scene representation for downstream perception and decision-making \cite{charatan2023pixelsplat,szymanowicz2023splatterimage,tang2024lgm}. However, this explicit parameterization introduces a representational mismatch under different camera poses or spatial transformations: geometric attributes follow standard $\SE(3)$ or $\SO(3)$ tensor rules, while SH coefficients transform via Wigner-$D$ matrices \cite{varshalovich1988angularmomentum,ivanic1996rotation,pinchon2007rotation}. Consequently, direct fusion of these attributes either breaks rotational symmetry or necessitates computationally intensive tensor-product pipelines.

\subsection{Representation Learning and Downstream Tasks on 3DGS}
A growing body of work processes 3DGS primitives as direct inputs for scene understanding. Object-centric and scene-level pipelines, such as ShapeSplat and SceneSplat, enable representation learning and vision--language pretraining directly on Gaussians \cite{ma2024shapesplat,li2025scenesplat,ma2025scenesplatpp}. Concurrently, 3DGS supports open-vocabulary segmentation, semantic querying, and interactive editing by embedding language-aligned features or optimizing prompt-driven graph cuts \cite{shi2023languagegaussians,guo2024semanticgaussians,peng2024vlgs,li2025semanticsplat,siegel2025segsplat,cen2025saga,choi2024clickgaussian,jain2024gaussiancut,shen2024flashsplat,li2025refersplat}. Dynamic 3DGS has also been integrated into language-conditioned robotic manipulation frameworks, such as ManiGaussian \cite{lu2024manigaussian}. Despite these advances, existing methods typically circumvent the complexity of joint geometric--photometric rotation. They either reduce photometry to view-independent components (SH0), discard higher-order effects, or rely on heuristic data augmentation to approximate rotational robustness. Our work provides the missing symmetry constraint, enabling joint equivariant learning of geometry and higher-order photometry.

\subsection{Equivariant Architectures and Matrix-Based Alternatives}
Equivariance provides an inductive bias for robustness and data efficiency when task-relevant symmetries are present, and it has become a standard design principle in modern geometric deep learning~\cite{bronstein2021geometricdl,cohen2016group,wang2022extrinsic,wang2022eqinv}. In 3D deep learning, standard approaches achieve $\SE(3)$ equivariance using irreducible representations and Clebsch--Gordan (CG) tensor products \cite{thomas2018tfn,fuchs2020se3transformer,weiler20183dsteerable,cohen2018spherical,geiger2022e3nn}. While highly expressive, the cost of these tensor products scales steeply with the representation degree, creating runtime and memory bottlenecks \cite{tholke2022torchmdnet,batatia2022mace,xie2025the,liao2023equiformerv2}. To improve scalability, alternative frameworks explore vector-based models, message passing, and Lie-group convolutions \cite{deng2021vectorneurons,satorras2021egnn,finzi2020lieconv}.

\paragraph{Cartesian-tensor networks.}
A complementary and closely related line replaces spherical-harmonic/CG machinery with \emph{Cartesian} tensor features. Most directly related to us, TensorNet~\cite{simeon2023tensornet} represents atomic environments with rank-2 Cartesian tensors and decomposes them into scalar, vector, and symmetric trace-free parts---the same $V_0\oplus V_1\oplus V_2$ irreducible content carried by $\mathfrak{gl}(3)$ under conjugation---and processes them through matrix products. Our construction differs in \emph{what is placed in the carrier}: Cartesian-tensor potentials build tensors from geometric (relative-position) features, whereas we identify the view-dependent \emph{photometric} SH coefficients themselves with the carrier through an explicit intertwiner (``color-as-geometry''), unifying appearance and geometry of a 3D Gaussian in a single conjugation law $M\mapsto RMR^\top$. We further show (\cref{thm:general_carrier}) that any SH degree is realized by conjugation on a carrier $\End(V_k)$, so the $3\times3$ Cartesian-tensor case is the smallest member of a complete family.

\paragraph{Matrix- and Lie-algebraic equivariant models.}
Beyond Cartesian tensors, matrix-feature and adjoint-equivariant Lie models process features through matrix operations for improved efficiency, numerical stability, or generality: EMLP~\cite{finzi2021emlp} solves the equivariant-linear-layer constraints for arbitrary matrix groups, and Lie Neuron / Reductive Lie Neuron (ReLN) constructions~\cite{lin2023lieneurons,batatia2023reductivelie,kim2025relns} target exact $\mathrm{GL}(n)$-equivariance with structured operations on Lie-algebra-valued features. Our backbone reuses ReLN-style $\Ad$-equivariant primitives, but the ReLN framework itself~\cite{kim2025relns} sidesteps photometric rotation by reducing appearance to the view-independent (invariant) SH$0$ component. In contrast, our contribution is the representation-theoretic identification of higher-order view-dependent photometry, SH$(0\text{--}2)$, with the $3\times3$ conjugation carrier, and its instantiation for 3DGS; this is a non-trivial step beyond applying an equivariant backbone to scalar colors.

Underlying these constructions is the classical correspondence between low-degree spherical harmonics and low-order tensors---including the symmetric trace-free component that governs Wigner-$D$ rotations and practical real-SH rotation formulas \cite{thorne1980multipole,varshalovich1988angularmomentum,ivanic1996rotation,pinchon2007rotation}. We make this correspondence explicit and operational by constructing an intertwiner from SH$(0\text{--}2)$ into $\mathfrak{gl}(3)$, so that geometry and photometry are processed uniformly and equivariantly under rigid-body ($\SE(3)$) motions, without CG tensor products in the feature pipeline.

\section{Unified Geometric--Photometric Equivariance via a Reductive Lie-Algebra Carrier}
\label{sec:unified_equivariance}

A 3D Gaussian primitive couples geometry (mean, covariance) with view-dependent photometry (SH). Under a camera-frame change $g=(R,t)\in\SE(3)$, the mean transforms via translation and rotation ($\mu\mapsto R\mu+t$), the covariance transforms by congruence ($\Sigma\mapsto R\Sigma R^\top$), and SH coefficients transform degree-wise via Wigner-$D$ matrices ($f_\ell\mapsto D^\ell(R)f_\ell$). These heterogeneous $\SO(3)$ representations obstruct a single equivariant processing pipeline. In this section, we prove that for SH degrees $\ell\le2$, both geometry and photometry admit an exact, unified realization within the reductive Lie algebra $\mathfrak{gl}(3)$ under matrix conjugation.

\subsection{3D Gaussian Primitives and Their Group Actions}
\label{subsec:primitive_actions}

We parameterize a 3D Gaussian primitive as $\mathcal{G}_i = \bigl(\mu_i, \Sigma_i, \alpha_i, \{f^{(c)}_{i,\ell}\}_{\ell=0}^{2}\bigr)$, where $\mu_i\in\mathbb{R}^3$ is the mean, $\Sigma_i\in\SPD(3)$ is the covariance, $\alpha_i\in\mathbb{R}$ is the opacity, and $f^{(c)}_{i,\ell}\in\mathbb{R}^{2\ell+1}$ denotes the degree-$\ell$ real SH coefficients for color channel $c$.

For a spatial rotation $R\in\SO(3)$, geometric attributes transform as $\mu_i \mapsto R\mu_i$, $\Sigma_i \mapsto R\Sigma_iR^\top$, and $\alpha_i \mapsto \alpha_i$, whereas SH coefficients transform as $f^{(c)}_{i,\ell} \mapsto D^\ell(R)\,f^{(c)}_{i,\ell}$. To isolate the rotational component under $\SE(3)$ motions, we operate on centered coordinates $\bar{\mu}_i := \mu_i - \frac{1}{N}\sum_{j=1}^N \mu_j$, ensuring $\bar{\mu}_i\mapsto R\bar{\mu}_i$ \cite{fuchs2020se3transformer, geiger2022e3nn}.

\subsection{A Unified $\mathfrak{gl}(3)$ Carrier for Geometry and Photometry}
\label{subsec:gl3_carrier}
To unify these heterogeneous actions, we adopt the matrix space $\mathfrak{gl}(3)=\mathbb{R}^{3\times 3}$ equipped with the standard conjugation action $\Ad_R(M) := RMR^\top$ for $R\in\SO(3)$. Under this adjoint action, $\mathfrak{gl}(3)$ decomposes into $\SO(3)$-irreducible subrepresentations:
\begin{equation}
\mathfrak{gl}(3)=\langle I\rangle \oplus \mathfrak{so}(3)\oplus \Sym_0(3),
\label{eq:gl3_irrep_decomp}
\end{equation}
where $\langle I\rangle=\{sI:s\in\mathbb{R}\}$ represents scalar matrices, $\mathfrak{so}(3)=\{A:A^\top=-A\}$ represents skew-symmetric matrices, and $\Sym_0(3)=\{S:S^\top=S,\ \tr(S)=0\}$ represents symmetric trace-free matrices. This decomposition matches the $\SO(3)$ irreducible types carried by spherical harmonics up to degree $\ell=2$. 

\paragraph{Remark (Exact capacity of the $3 \times 3$ carrier).}
Since $\mathfrak{gl}(3)\cong \End(\R^3)\cong V_1\otimes V_1\cong V_0\oplus V_1\oplus V_2$ as $\SO(3)$-representations under conjugation,
no $V_3$ component occurs, so the $3\times 3$ carrier realizes exactly the SH types $\ell\le 2$ and no more. As we show next (\cref{thm:general_carrier}), the same conjugation principle extends to every SH degree: each finite degree is realized as conjugation on a suitable carrier $\End(V_k)$, and the $3\times3$ carrier is simply the smallest one.



\begin{theorem}[Equivariant matrix realization of SH degrees $\ell\le2$]
\label{thm:sh_to_gl3_intertwiner}
Fix an orthonormal spherical-harmonic basis, and let $D^\ell(R)$ denote the corresponding real Wigner-$D$ matrices.
For each $\ell\in\{0,1,2\}$, there exists a linear $\SO(3)$-equivariant map
$\Phi_\ell:\R^{2\ell+1}\rightarrow V_\ell\subset \mathfrak{gl}(3)$ such that for all $R\in\SO(3)$ and $f_\ell\in\R^{2\ell+1}$,
\begin{equation}
\Phi_\ell\!\bigl(D^\ell(R)f_\ell\bigr) = \Ad_R\!\bigl(\Phi_\ell(f_\ell)\bigr)=R\,\Phi_\ell(f_\ell)\,R^\top.
\label{eq:phi_intertwining}
\end{equation}
Consequently, $\Phi_{\le2}:=\Phi_0\oplus\Phi_1\oplus\Phi_2$ identifies SH$(0$--$2)$ with a single $\mathfrak{gl}(3)$-valued representation
transforming by conjugation.
\end{theorem}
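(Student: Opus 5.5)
The plan is to build each $\Phi_\ell$ explicitly from the fact that real spherical harmonics of degree $\ell$ are restrictions to the unit sphere of harmonic homogeneous polynomials of degree $\ell$. These polynomials correspond to symmetric trace-free Cartesian tensors of rank $\ell$. Rotating a function, $(R\cdot Y)(x)=Y(R^\top x)$, acts on these tensors by the natural tensor action. The Wigner-$D$ matrices are exactly the matrices of this action in the chosen orthonormal basis; I will use this as the defining convention, fixing the convention $f_\ell\mapsto D^\ell(R)f_\ell$ so that the represented function rotates as $x\mapsto R^\top x$. With this in hand, the three cases are handled separately and then assembled.

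\textbf{Cases $\ell=0$ and $\ell=1$.} For $\ell=0$, set $\Phi_0(f_0)=c_0 f_0 I$. Since $D^0(R)=1$ and $RIR^\top=I$, equivariance is immediate. For $\ell=1$, the basis functions are proportional to $y,z,x$ (in the standard real-SH ordering), so there is a fixed permutation-and-scaling matrix $P$ with $D^1(R)=PRP^{-1}$. Define $\Phi_1(f_1)=\hatop{Pf_1}$, the hat map into $\mathfrak{so}(3)$. The identity $\hatop{Rv}=R\hatop{v}R^\top$ for $R\in\SO(3)$ then gives
\[
\Phi_1(D^1(R)f_1)=\hatop{RPf_1}=R\,\Phi_1(f_1)\,R^\top .
\]

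\textbf{Case $\ell=2$.} Each degree-2 basis function can be written uniquely as $Y_{2,m}(x)=x^\top S_m x$ with $S_m\in\Sym_0(3)$. Tracelessness holds because the polynomial is harmonic: $\Delta(x^\top Sx)=2\tr S$. Define $\Phi_2(f_2)=\sum_m f_{2,m}S_m$. The function represented by $f_2$ is $x^\top\Phi_2(f_2)x$. Rotating it gives
\[
(R^\top x)^\top \Phi_2(f_2)(R^\top x)=x^\top\bigl(R\Phi_2(f_2)R^\top\bigr)x .
\]
The same rotated function is represented by $D^2(R)f_2$, that is, by $x^\top\Phi_2(D^2(R)f_2)x$. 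Since a quadratic form determines its symmetric matrix, and $\Phi_2$ is injective because the $S_m$ form a basis of the 5-dimensional space $\Sym_0(3)$, the intertwining identity follows. The $\ell=1$ case can be phrased the same way if one prefers uniformity.

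The direct sum $\Phi_{\le2}$ is then equivariant by linearity. It is an isomorphism onto $\mathfrak{gl}(3)$ because the images $\langle I\rangle$, $\mathfrak{so}(3)$ and $\Sym_0(3)$ are exactly the summands of \eqref{eq:gl3_irrep_decomp}.

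The main obstacle is bookkeeping rather than conceptual. The argument must pin down the Wigner-$D$ convention (active versus passive rotation, and the basis ordering and sign conventions of real SH) so that the identity holds with $R$ and not $R^\top$. If a different convention is intended, I would absorb the discrepancy by replacing $R$ with $R^{-1}$ consistently, or by adjusting the fixed basis map $P$. As a cross-check, Schur's lemma guarantees that the intertwiners are unique up to scale on each irreducible summand. So any mismatch can only be a convention issue, not a failure of existence.
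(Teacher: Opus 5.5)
Your proposal is correct and follows essentially the same construction as the paper's appendix: $f_0 I$ for $\ell=0$, the hat map after a fixed signed-permutation/scale change of coordinates for $\ell=1$ (the paper's vector $u$), and matching the quadratic form $d^\top S d$ with the degree-2 polynomial to land in $\Sym_0(3)$ for $\ell=2$, where your function-rotation argument supplies the derivation that the appendix only asserts and checks numerically. One small slip in the $\ell=1$ case: with $D^1(R)=PRP^{-1}$ you must define $\Phi_1(f_1)=\hatop{P^{-1}f_1}$ (or keep your definition and write $D^1(R)=P^{-1}RP$), because as written $P\,D^1(R)f_1=P^2RP^{-1}f_1$, which differs from $RPf_1$ for generic $R$ when $P$ is a cyclic permutation such as the $(y,z,x)$ ordering.
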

One concrete construction including basis conventions is given in the supplementary material.

\paragraph{General carrier: every SH degree is a matrix under conjugation.}
The construction above is the minimal instance of a general principle. For any degree-$k$ carrier $V_k$ with Wigner matrices $D^k(R)$, the conjugation representation $X\mapsto D^k(R)\,X\,D^k(R)^{-1}$ on $\End(V_k)$ decomposes as $\End(V_k)\cong V_k\otimes V_k\cong\bigoplus_{\ell=0}^{2k}V_\ell$.

\begin{theorem}[General matrix realization of spherical harmonics]
\label{thm:general_carrier}
For every SH degree $L$ there exist a carrier $V_k$ with $k=\lceil L/2\rceil$ and a linear, $\SO(3)$-equivariant injection $\iota_L:V_L\hookrightarrow \End(V_k)$, unique up to scale, such that for all $R\in\SO(3)$ and $f_L\in V_L$,
\begin{equation}
\iota_L\!\bigl(D^L(R)\,f_L\bigr)=D^k(R)\,\iota_L(f_L)\,D^k(R)^{-1}.
\end{equation}
Thus the Wigner-$D^L$ action on any spherical-harmonic band is realized \emph{exactly} as matrix conjugation, without Clebsch--Gordan tensor products at runtime. The model used in this paper is the case $k=1$, the unique geometry-native carrier ($D^1(R)=R$), which realizes SH$(0$--$2)$; the next band SH$3$ is realized in $\End(V_2)$ ($5\times5$). A constructive proof and a numerical check are given in \cref{app:general_carrier}.
\end{theorem}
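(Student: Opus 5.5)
The plan is to reduce the statement to the Clebsch--Gordan decomposition $V_k\otimes V_k\cong\bigoplus_{\ell=0}^{2k}V_\ell$ together with Schur's lemma. The core of the argument needs no explicit coefficients; a constructive formula can be added at the end.

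\textbf{Step 1: the conjugation representation.} First I identify $\End(V_k)$, under $X\mapsto D^k(R)XD^k(R)^{-1}$, with $V_k\otimes V_k^*$. Since the real Wigner matrices are orthogonal, $D^k(R)^{-1}=D^k(R)^\top$. Hence $V_k^*\cong V_k$ via the invariant inner product, and $\End(V_k)\cong V_k\otimes V_k$. The Clebsch--Gordan rule for $\SO(3)$ then gives $V_k\otimes V_k\cong\bigoplus_{\ell=0}^{2k}V_\ell$, with each $V_\ell$ appearing exactly once. I would justify this by comparing characters: the character of $V_k\otimes V_k$ at rotation angle $\theta$ is $\chi_k(\theta)^2$, where $\chi_k(\theta)=\sum_{m=-k}^k e^{im\theta}$. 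The weight multiplicities of $\chi_k^2$ are $2k+1-|m|$ for $|m|\le 2k$, and these telescope to $\sum_{\ell=0}^{2k}\chi_\ell$. Real irreducibles of $\SO(3)$ are absolutely irreducible, so the complex multiplicity count transfers to the real setting.

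\textbf{Step 2: existence and uniqueness of $\iota_L$.} Set $k=\lceil L/2\rceil$. Then $2k\ge L$, so $V_L$ occurs in $\End(V_k)$ with multiplicity one. Let $\pi_L$ be the isotypic projection. By Schur's lemma, $\Hom_{\SO(3)}(V_L,\End(V_k))$ is one-dimensional over $\R$, because $V_L$ is absolutely irreducible of real type. Any nonzero element is therefore injective, since its kernel is a proper invariant subspace of an irreducible representation. This gives $\iota_L$, unique up to scale, and the intertwining identity is exactly the definition of equivariance.

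I would also note why $k=\lceil L/2\rceil$ is the smallest admissible choice: $V_L$ appears in $\End(V_k)$ only if $L\le 2k$. For $L\le 2$ this gives $k=1$, and with $D^1(R)=R$ in a Cartesian basis it recovers \cref{thm:sh_to_gl3_intertwiner}. For $L=3$ it gives $k=2$, a $5\times5$ carrier.

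\textbf{Step 3: an explicit construction.} For a constructive proof I would use the symmetric trace-free tensor picture. Concretely, $\iota_L(f)$ is the component of $\End(V_k)$ obtained by contracting the degree-$L$ STF tensor of $f$ against Clebsch--Gordan coefficients $\langle k\,m;\,k\,m'\mid L\,M\rangle$, converted to the real basis. Equivalently, and more simply, I would take any nonzero $X_0\in\End(V_k)$ and form the averaging map
\[
f\mapsto\int_{\SO(3)}\langle D^L(R)^\top f,\ e\rangle\, D^k(R)X_0D^k(R)^{-1}\,dR.
\]
Here $e$ and $X_0$ are chosen so that the result is nonzero, for example $X_0$ a weight-$0$ vector of the $V_L$ component. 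Equivariance of this map follows from invariance of the Haar measure.

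\textbf{Main obstacle.} I expect the main difficulty to be the real-versus-complex bookkeeping: making sure the real Wigner representation $D^k$ has a real-type $V_L$ summand in $\End(V_k)$, so that real Schur uniqueness holds. I would handle this by showing that real $\SO(3)$ irreducibles have commutant $\R$. The reason is that each complex $V_\ell$ is self-conjugate and has a real structure, namely the real spherical harmonics.
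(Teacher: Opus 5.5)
Your proposal is correct and follows essentially the paper's route: both identify $\End(V_k)\cong V_k\otimes V_k^\ast\cong V_k\otimes V_k\cong\bigoplus_{\ell=0}^{2k}V_\ell$ and extract the copy of $V_L$ once $2k\ge L$. Where the paper simply invokes an irreducible tensor-operator basis $T^{(L,k)}_m$, you derive existence and the ``unique up to scale'' claim from multiplicity one plus real-type Schur, which supplies the uniqueness argument the paper only asserts. One small slip: for $L=0$ one has $\lceil L/2\rceil=0$, so ``for $L\le 2$ this gives $k=1$'' holds only for $L\in\{1,2\}$, although $V_0$ does also sit in $\End(V_1)$ as $\langle I\rangle$.
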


Thus the $3\times3$ carrier is the smallest member of a complete family of matrix carriers: it covers SH degrees $\ell\le 2$, and larger carriers $\End(V_k)$ cover higher degrees.

\subsection{Unified Matrix Embedding of Gaussian Attributes into $\mathfrak{gl}(3)$}
\label{subsec:unified_lifting}
Leveraging Theorem~\ref{thm:sh_to_gl3_intertwiner}, we define a lifting mapping that embeds Gaussian attributes into corresponding $\mathfrak{gl}(3)$ subspaces:


\subsubsection{Covariance lift.}
Let $\Sym(3):=\{S\in\mathbb{R}^{3\times 3}:S^\top=S\}$ and $\SPD(3):=\{S\in\Sym(3):x^\top S x>0\ \ \forall x\neq 0\}$ be the space of symmetric positive-definite matrices. We embed the covariance via the matrix logarithm:
\begin{equation}
C_i := \log(\Sigma_i) \in \Sym(3) \subset \mathfrak{gl}(3).
\label{eq:cov_direct_lift}
\end{equation}
Under a spatial rotation $R\in\SO(3)$, the covariance transforms by congruence ($\Sigma_i \mapsto R\Sigma_iR^\top$). As the matrix logarithm commutes with orthogonal transformations, the lifted covariance preserves the adjoint equivariance:
\begin{equation}
    \log(R\Sigma_iR^\top) = R\log(\Sigma_i)R^\top = \Ad_R(C_i).
\label{eq:cov_congruence}
\end{equation}
This logarithmic lifting maintains exact $\SO(3)$ symmetry and maps the multiplicative $\SPD(3)$ manifold into a flat vector space $\Sym(3)$, improving numerical stability during network forward passes \cite{kim2025relns}.

\subsubsection{Position lift.}
Using the centered position $\bar\mu_i$ (Section~\ref{subsec:primitive_actions}), we lift the translational component via the hat map:
\begin{equation}
P_i := \widehat{\bar{\mu}_i} \in \mathfrak{so}(3), \quad \text{which natively satisfies} \quad \widehat{R\bar{\mu}_i} = R\widehat{\bar{\mu}_i}R^\top.
\label{eq:centered_position_hat}
\end{equation}

\subsubsection{Photometry lift.}
For each color channel $c\in\{r,g,b\}$, we lift SH coefficients into a single matrix
\begin{equation}
S_i^{(c)} := \Phi_{\le2}\!\left(f_{i,\le2}^{(c)}\right)\in \mathfrak{gl}(3),
\label{eq:sh_lift_per_color}
\end{equation}
where $f_{i,\le2}^{(c)}=(f_{i,0}^{(c)},f_{i,1}^{(c)},f_{i,2}^{(c)})$ stacks degrees $\ell\le2$.

\subsubsection{Equivariant Features and Type-0 Invariant Scalar Features.}
We collect the equivariant features as $H_i := [ P_i,\; C_i,\; S_i^{(r)},\; S_i^{(g)},\; S_i^{(b)} ] \in \mathfrak{gl}(3)^{C_{\mathrm{eq}}}$. Alongside these tensor channels, we maintain a separate Type-0 branch for $\SO(3)$-invariant scalars, defined as $s_i := [ \alpha_i,\; f_{i,0}^{(r)},\; f_{i,0}^{(g)},\; f_{i,0}^{(b)},\; E_{\mathrm{task}} ]^\top \in \mathbb{R}^{C_0}$, where $\alpha_i$ is the opacity, $f_{i,0}^{(c)}$ denotes the diffuse color, and $E_{\mathrm{task}}$ represents task-specific semantics (\eg, language instruction embeddings). During the network's forward pass, invariant scalars derived from $H_i$ (\eg, $\tilde{B}$-contractions) are used to gate matrix channels without breaking equivariance.

\begin{proposition}[Equivariance of the lifted representation]
\label{prop:unified_conjugation}
For $R\in\SO(3)$, each matrix channel $(H_i)_k\in\mathfrak{gl}(3)$ transforms as
$(H_i)_k\mapsto \Ad_R((H_i)_k)=R(H_i)_kR^\top$, while $s_i$ is invariant.
\end{proposition}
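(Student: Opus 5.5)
The argument goes channel by channel: for each of the five channels $P_i$, $C_i$, $S_i^{(r)}$, $S_i^{(g)}$, $S_i^{(b)}$ of $H_i$, we compute its image under a rotation $R\in\SO(3)$ using the primitive actions of \cref{subsec:primitive_actions}. The scalar branch $s_i$ is then checked entry by entry. Throughout, $R$ acts on the primitive by $\mu_j\mapsto R\mu_j$, $\Sigma_i\mapsto R\Sigma_iR^\top$, $\alpha_i\mapsto\alpha_i$ and $f^{(c)}_{i,\ell}\mapsto D^\ell(R)f^{(c)}_{i,\ell}$. The task embedding $E_{\mathrm{task}}$ does not depend on the scene frame.

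\emph{Position channel.} Centering is linear, so $\bar\mu_i=\mu_i-\frac1N\sum_j\mu_j\mapsto R\bar\mu_i$. For the hat map we use the identity $\widehat{Rv}=R\hat v R^\top$. To verify it, apply both sides to an arbitrary $w$:
\begin{equation*}
R\hat v R^\top w = R\bigl(v\times R^\top w\bigr) = (Rv)\times w .
\end{equation*}
The last equality holds because a proper rotation preserves cross products, $R(a\times b)=Ra\times Rb$, which requires $\det R=1$. Hence $P_i\mapsto RP_iR^\top$.

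\emph{Covariance channel.} $\Sigma_i$ is SPD, so we write $\Sigma_i=Q\Lambda Q^\top$ with $Q$ orthogonal and $\Lambda$ diagonal and positive, and set $\log\Sigma_i=Q\log\Lambda\,Q^\top$. Then $R\Sigma_iR^\top=(RQ)\Lambda(RQ)^\top$ is a spectral decomposition of the rotated covariance, with $RQ$ orthogonal. Since the principal logarithm of an SPD matrix is unique, $\log(R\Sigma_iR^\top)=RQ\log\Lambda\,Q^\top R^\top=RC_iR^\top$, which is \eqref{eq:cov_congruence}.

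\emph{Photometric channels.} Fix a color $c$. By definition $\Phi_{\le2}$ acts blockwise on the stacked coefficients and sums the results. By \cref{thm:sh_to_gl3_intertwiner}, each block satisfies $\Phi_\ell(D^\ell(R)f)=R\Phi_\ell(f)R^\top$. Conjugation is linear, so
\begin{equation*}
\Phi_{\le2}\bigl((D^\ell(R)f^{(c)}_{i,\ell})_{\ell\le2}\bigr)=\sum_{\ell}R\,\Phi_\ell(f^{(c)}_{i,\ell})\,R^\top = R\,S_i^{(c)}R^\top .
\end{equation*}

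\emph{Invariant branch.} The opacity $\alpha_i$ and the task embedding $E_{\mathrm{task}}$ are invariant by assumption. For the diffuse colors, $D^0(R)=1$ because the degree-$0$ harmonic is constant, so each $f^{(c)}_{i,0}$ is unchanged. Hence $s_i$ is invariant.

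The only step that is not pure bookkeeping is the covariance logarithm, and it reduces to uniqueness of the SPD logarithm, which is standard. Everything else follows directly from \cref{thm:sh_to_gl3_intertwiner} and from the fact that rotations preserve cross products.
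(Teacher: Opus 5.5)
Your proof is correct and follows the same channel-by-channel route as the paper, whose proof simply cites the hat-map identity \eqref{eq:centered_position_hat}, the log-congruence identity \eqref{eq:cov_congruence}, and the intertwining property \eqref{eq:phi_intertwining}. You go further by proving the first two identities (via cross-product preservation when $\det R=1$, and uniqueness of the symmetric logarithm of an SPD matrix) and by checking the invariance of $s_i$, which the paper's proof leaves implicit.
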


\begin{proof}
The claim follows from \eqref{eq:centered_position_hat}, \eqref{eq:cov_congruence}, and the intertwining property \eqref{eq:phi_intertwining}.
\end{proof}


\subsection{ReLN Processing on the Unified Carrier}
\label{subsec:reln_on_gl3}
Given the lifted representations $(H, s)$ of the 3D Gaussian primitives, we construct an encoder $\mathcal{F}_\theta$ utilizing $\Ad$-equivariant Reductive Lie Neurons (ReLN) building blocks \cite{kim2025relns}. To parameterize nonlinearities without breaking symmetry, we construct $\Ad$-invariant scalars using the non-degenerate modified Killing form on $\mathfrak{gl}(3)$: $\tilde{B}(X,Y)=6\,\tr(XY)-\tr(X)\tr(Y)$. By stacking equivariant linear and nonlinear layers, the entire block commutes with $\Ad$.

\begin{theorem}[Equivariance of the Neural Encoder]
\label{thm:reln_on_gl3_equiv}
Let $H$ and $s$ denote the equivariant features and invariant scalar features of the input 3D Gaussians. An encoder $\mathcal{F}_\theta(H, s) = (H', s')$ composed of $\Ad$-equivariant ReLN primitives and invariant-weighted aggregations satisfies:
\begin{equation}
    \mathcal{F}_\theta(\Ad_R(H),\, s) = \big(\Ad_R(H'),\, s'\big) \quad \forall R\in\SO(3),
\end{equation}
where $\Ad_R$ denotes channel-wise conjugation.
\end{theorem}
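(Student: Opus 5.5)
The plan is to prove \cref{thm:reln_on_gl3_equiv} by structural induction on the composition of layers. Equivariance is preserved under composition. So it suffices to show that each primitive building block $\mathcal{L}$ maps a pair $(H,s)$ with $H$ transforming by channel-wise $\Ad_R$ and $s$ invariant to a pair with the same transformation behaviour. In other words, we need $\mathcal{L}(\Ad_R H, s)=(\Ad_R H_{\mathrm{out}}, s_{\mathrm{out}})$. The base case is supplied by \cref{prop:unified_conjugation}: the lifted input features already have this property.

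First, we check the linear layers. A ReLN linear layer mixes channels only, $H'_k=\sum_j W_{kj}H_j$ with scalar weights $W_{kj}$. Since $\Ad_R$ is linear on $\mathfrak{gl}(3)$, it commutes with such mixing. Any term that maps a scalar into the matrix branch must go through an equivariant embedding, namely $s\mapsto sI$. This is fine because $\Ad_R(I)=I$.

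Second, we establish the invariance of the scalar contractions. We have $\tr(\Ad_R X\,\Ad_R Y)=\tr(RXYR^\top)=\tr(XY)$ and $\tr(\Ad_R X)=\tr(X)$, so $\tilde B(\Ad_R X,\Ad_R Y)=\tilde B(X,Y)$. Hence every scalar built from $\tilde B$ is $\SO(3)$-invariant, and so is any MLP in these scalars and in $s$; this takes care of the Type-0 branch output $s'$.

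Third, we treat the nonlinear operations.
\begin{itemize}
\item Gating has the form $H'_k=\sigma(\text{invariants})\,H_k$. It is equivariant because the gate is an invariant scalar multiplying an equivariant matrix.
\item ReLN-style ReLU and bracket nonlinearities of the form $X-\mathbb{1}[\tilde B(X,K)<0]\,\frac{\tilde B(X,K)}{\tilde B(K,K)}K$, with $K$ a learned channel combination, are equivariant by the same argument. The bracket $[X,Y]$ satisfies $[\Ad_RX,\Ad_RY]=\Ad_R[X,Y]$.
\item ReLN-LayerNorm divides by an invariant norm built from $\tilde B$, or from a positive-definite invariant such as $\tr(X^\top X)$. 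It is therefore equivariant away from the degenerate zero set.
\item ReLN-Attention computes logits $\tilde B(Q_i,K_j)$ and then softmax weights $a_{ij}$, both invariant. The output $\sum_j a_{ij}V_j$ is an invariant-weighted sum of equivariant values, so it is equivariant.
\end{itemize}
Pooling and aggregation over Gaussians are likewise invariant-weighted sums of equivariant features. Residual connections are sums of equivariant maps.

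The main obstacle is not any single computation but making the set of ReLN primitives precise enough that the case analysis is exhaustive. There are two subtleties. One is the ReLU-type nonlinearity, where one must confirm that the branch condition depends only on invariants. The other is LayerNorm, where one must ensure the normalizer is an invariant; $\tilde B(X,X)$ is indefinite, so it must be handled with an absolute value or an $\epsilon$-regularized $\tr(X^\top X)$. Once each primitive is verified, we conclude by composing the primitives, applying \cref{prop:unified_conjugation} at the input.
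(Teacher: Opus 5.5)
Your proposal is correct and matches the paper's argument. The paper likewise reduces the theorem to composing primitives that each commute with channel-wise $\Ad_R$: channel mixing, $\tilde{B}$-gated nonlinearities, ReLN-Attention with $\tilde{B}$-logits, and ReLN-LayerNorm via the conjugation-invariant Frobenius norm, with invariant scalars commuting with conjugation ($s\cdot\Ad_R(H)=\Ad_R(s\cdot H)$). Your case-by-case check is a more explicit version of this, with two small points: \cref{prop:unified_conjugation} is not needed as a base case, since the theorem already takes $\Ad_R(H)$ as input; and the paper's $\epsilon$ in the LayerNorm denominator removes your degenerate-set caveat.
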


As the scalar features $s$ are $\SO(3)$-invariant, scaling or gating the equivariant channels by these scalars commutes with the adjoint action: $s \cdot \Ad_R(H) = \Ad_R(s \cdot H)$. This design guarantees exact global equivariance while leveraging the scalar context.

\section{Adjoint-Equivariant Architecture for 3D Gaussian Splatting}
\label{sec:reln_3dgs_arch}

\begin{figure*}[t]
\centering
\includegraphics[width=\textwidth]{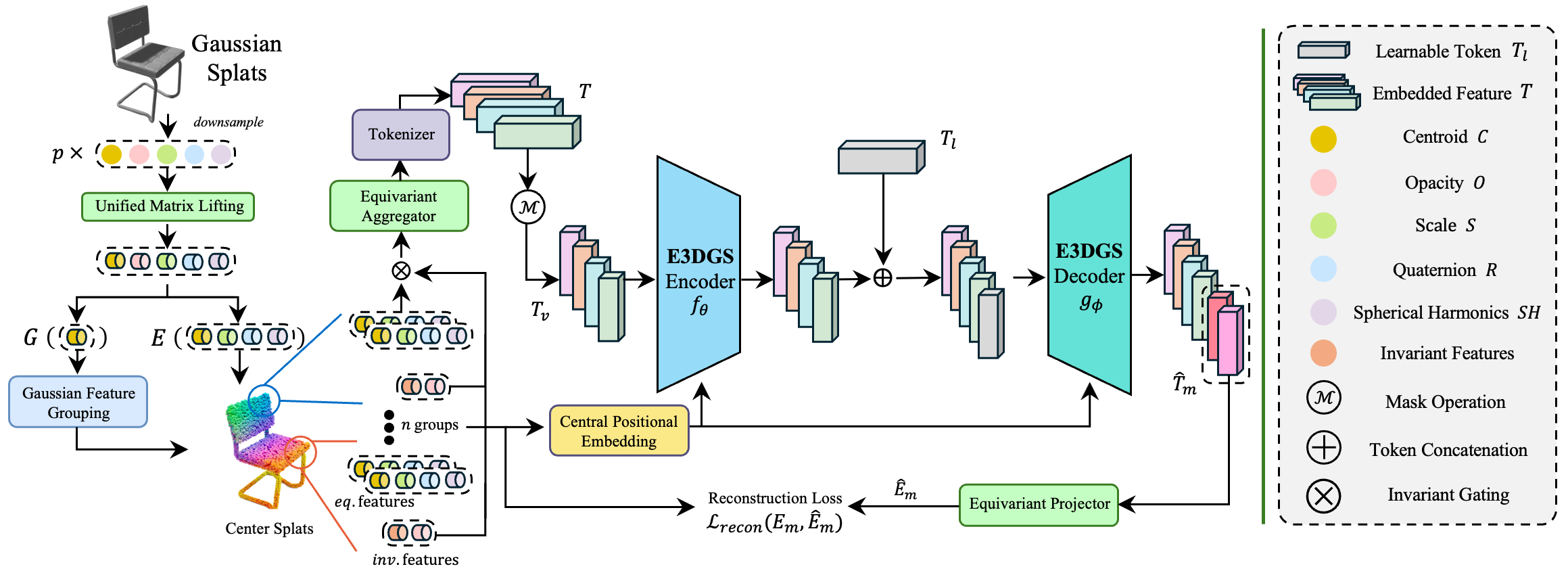}
\caption{\textbf{Architecture of the Base E3DGS Gaussian Masked Autoencoder.} We instantiate our framework as a Gaussian MAE for object vision tasks (\cref{subsec:shapesplat_results}). Input 3D Gaussians undergo Unified Matrix Lifting to embed geometry and photometry into a single carrier. Following Gaussian Feature Grouping, scalar invariants modulate the equivariant features via \textit{invariant gating} ($\otimes$). The tokens are partially masked, processed by an equivariant autoencoder ($f_\theta, g_\phi$), and reconstructed via an Equivariant Projector to compute $\mathcal{L}_{\mathrm{recon}}$.}
\label{fig:main_architecture}
\end{figure*}

Building on the unified $\mathfrak{gl}(3)$ carrier (\cref{sec:unified_equivariance} and \cref{fig:fig1}), we introduce \textbf{E3DGS}, an end-to-end equivariant architecture for representation learning on 3DGS. We instantiate E3DGS in two regimes: a Gaussian Masked Autoencoder for large-scale self-supervised pretraining and object recognition (\cref{fig:main_architecture}), and an action-conditioned Gaussian world model for robotic manipulation.

E3DGS consists of three stages: (i) a unified lifting that embeds each primitive into a single $\mathfrak{gl}(3)$ matrix carrier, while collecting rotation-invariant scalars for gating; (ii) an $\Ad$-equivariant ReLN backbone $\mathcal{F}_\theta$ that mixes matrix channels and applies invariant-gated updates; and (iii) task-specific heads with inverse lifting for geometric outputs.


\subsection{Backbone Instantiation via $\Ad$-Equivariant Operations}
\label{subsec:arch_operator_replacement}
We construct the E3DGS backbone by enforcing $\Ad$-equivariance on the $\mathfrak{gl}(3)$-valued features, systematically replacing standard Transformer operations that violate rotational symmetry with Lie-algebraic counterparts.

\subsubsection{ReLN-Attention.}
In standard Transformers, attention scores rely on inner products that break under arbitrary rotations if applied directly to geometric coordinates. To resolve this, we propose \textit{ReLN-Attention}, an equivariant self-attention mechanism native to the $\mathfrak{gl}(3)$ manifold. We project the tokenized inputs into $\Ad$-equivariant queries, keys, and values ($Q, K, V \in \mathfrak{gl}(3)$). We construct the attention weights $a_{ij}$ using the modified Killing form $\tilde{B}$ as the invariant similarity measure between query and key matrices:
\begin{equation}
    a_{ij} \;=\; \mathrm{softmax}_j \Big( \tilde{B}(Q_i, K_j) \Big).
\label{eq:arch_attn_update}
\end{equation}
Because the modified Killing form is invariant under conjugation ($\tilde{B}(\Ad_R Q, \Ad_R K) = \tilde{B}(Q, K)$), the computed scalar weights $a_{ij}$ are invariant. Consequently, the linear combination $\sum a_{ij} V_j$ preserves the $\Ad$-equivariance of the value matrices during token mixing.

\subsubsection{Equivariant Layer Normalization (ReLN-LayerNorm).}
To stabilize training without destroying structural symmetry, we generalize VN-LayerNorm~\cite{assaad2023vntransformer} from 3D vectors to the $\mathfrak{gl}(3)$ manifold, introducing \textbf{ReLN-LayerNorm}. We compute the scalar magnitude of each $3 \times 3$ channel using the Frobenius norm—a positive-definite invariant metric on $\mathfrak{gl}(3)$. We apply standard LayerNorm exclusively to these magnitudes and rescale the original matrices:
\begin{equation}
    X'_c \;=\; \frac{\mathrm{LN}\big(\|X_c\|_F\big)}{\|X_c\|_F + \epsilon} X_c,
\end{equation}
where $\mathrm{LN}$ operates over the channel dimension $c$. As $\|\Ad_R(X_c)\|_F = \|X_c\|_F$, the scalar multiplication stabilizes variance while preserving geometric orientation.

\subsubsection{Equivariant Feed-Forward (ReLN-Linear, ReLN-ReLU).}
We replace standard feed-forward networks with \textit{ReLN-Linear} and \textit{ReLN-ReLU} operations~\cite{kim2025relns} to induce pointwise nonlinearities while maintaining equivariance under the $\Ad$ action. Instead of element-wise activations, the network processes matrix channels through $\Ad$-equivariant linear mixings (ReLN-Linear) and invariant-gated activations based on the modified Killing form $\tilde{B}$ (ReLN-ReLU). Stacking equivariant linear and nonlinear layers preserves exact rotational symmetry.

\subsection{Task Heads and Inverse Geometric Lifting}
\label{subsec:task_heads_and_lifting}
The downstream task dictates the symmetry requirements of the prediction head.

\paragraph{\textbf{Invariant Readout.}}
For invariant tasks (\eg, classification, global property prediction), we extract scalar invariants from the matrix channels via $\tilde{B}$-based self-contractions~\cite{kim2025relns}. These are concatenated with the scalar branch $s_i$, pooled, and processed by a standard MLP to yield rotation-invariant predictions.

\paragraph{\textbf{Equivariant Prediction \& Inverse Lifting.}}
For geometric predictions (\eg, dynamics), the network outputs conjugated matrices $H_{\mathrm{out}}\in\mathfrak{gl}(3)$. To map latent spatial channels back to physical $\mathbb{R}^3$ vectors, we project them onto $\mathfrak{so}(3)$ and apply the inverse hat map ($\vee$):
\begin{equation}
    \tilde{v}_i = \Big(\mathrm{skew}\big((H_{\mathrm{out}})_{\mathrm{pos}}\big)\Big)^\vee \in \mathbb{R}^3,
\label{eq:inv_vec_readout}
\end{equation}
where $\mathrm{skew}(A)=\tfrac12(A-A^\top)$. Because both the skew projection and the vee map commute with orthogonal transformations, this operation guarantees exact $\SO(3)$-equivariance ($\tilde{v}_i \mapsto R\,\tilde{v}_i$).


\section{Experiments}
\label{sec:experiments}

We evaluate the \textbf{E3DGS} in two regimes that apply 3D Gaussian primitives: object-level recognition (\cref{subsec:shapesplat_results}) and action-conditioned Gaussian world modeling (\cref{subsec:manigaussian_results}) for visuomotor control. We adhere to the standard evaluation protocols of the respective benchmarks to ensure controlled comparisons.

\paragraph{Network Implementations and Scalability.}
We instantiate the E3DGS backbone by lifting spatial and photometric attributes into unified $3\times3$ matrix channels. To align capacity with the underlying 3D carrier, we use a hidden width of $C/3$. This design reduces trainable parameters in practice by about $2.4$--$2.6\times$ across the reported settings (Appendix Table~\ref{tab:comp_cost}), because matrix-valued channels are mixed along the channel axis rather than treated as nine independent scalar channels. At the same time, the matrix contractions and invariant bilinear operations increase GMACs, so we describe the method as parameter-efficient rather than broadly runtime-efficient.

\paragraph{Object Vision: Gaussian-MAE.}
To assess object recognition from 3D Gaussian splats, we evaluate our E3DGS-MAE architecture using the ShapeSplat dataset and evaluation protocol~\cite{ma2024shapesplat}. By training our equivariant transformer backbone under this established pretraining--fine-tuning setup, we isolate the contributions of available SH degrees ($\ell\le 0,1,2$) and measure (i) robustness to coordinate frame shifts and (ii) label efficiency in low-data regimes.

\paragraph{Gaussian World Model: RLBench.}
To determine if the equivariant representation improves action-conditioned dynamics modeling, we design an E3DGS-based world model and evaluate it within the ManiGaussian~\cite{lu2024manigaussian} framework. We utilize the RLBench multi-task setup (10 tasks, 166 variations) to measure the impact of geometric regularization on manipulation task success rates.


\subsection{Object recognition on ShapeSplat (Gaussian-MAE)}
\label{subsec:shapesplat_results}

We evaluate object classification on ModelNet10-4K using the Gaussian-MAE pretraining and finetuning pipeline~\cite{ma2024shapesplat}. We instantiate our E3DGS-MAE in the \textsc{Full} finetuning setting to update all network parameters.

\begin{table}[t]
\centering
\caption{Classification Accuracy on ModelNet10 (overall $\uparrow$[\%]). ID and SE(3) denote Identity and Special Euclidean group. Baselines are reproduced under identical configurations for fair comparison.}
\label{tab:rot_robust}
\begin{tabular*}{\linewidth}{@{\extracolsep{\fill}}lccc@{}}
\toprule
Method & ID / ID & ID / SE(3) & SE(3) / SE(3) \\ \midrule
Point-BERT \cite{yu2022pointbert} & 94.82 & \texttimes & \texttimes \\
Point-MAE \cite{pang2022pointmae} & 94.93 & \texttimes & \texttimes \\
Gaussian-MAE \cite{ma2024shapesplat} & 94.71 & 18.28 & 88.32 \\ 
Gaussian-MAE; SH (0,1)   & 94.27 & 16.27 & 89.87 \\ 
Gaussian-MAE; SH (0,1,2) & 94.05 & 12.89 &  88.87 \\ \midrule
E3DGS-MAE; SH (0) & 95.15 & 95.15 & 94.82 \\ 
E3DGS-MAE; SH (0,1) & \textbf{95.26} & \textbf{95.26} & \textbf{95.26} \\ 
E3DGS-MAE; SH (0,1,2) & 94.79 & 94.79 & 94.49 \\ \bottomrule
\end{tabular*}
\end{table}

For transformed evaluations, we use the active scene-rotation convention: rotating the Gaussian scene by $R$ sends directions $d\mapsto Rd$ and SH coefficients $f_{i,\ell}^{(c)}\mapsto D^\ell(R)f_{i,\ell}^{(c)}$, consistently with the mean and covariance transforms. A passive camera-coordinate change would instead use $D^\ell(R^\top)$.

\paragraph{Zero-Shot Robustness and SH Ablation.}
Table~\ref{tab:rot_robust} presents classification results under three pose distributions: \textsc{ID/ID} (canonical), \textsc{ID/$\SE(3)$} (zero-shot transformation), and \textsc{$\SE(3)$/$\SE(3)$} (pose-augmented). For fair comparison, baselines are reproduced using identical training recipes, epochs, and architectural capacities.

Under zero-shot transformations (\textsc{ID/$\SE(3)$}), baselines experience failure (e.g., G-MAE dropping to 18.28\%). In contrast, E3DGS-MAE maintains consistent accuracy, demonstrating equivariance. In the canonical \textsc{ID/ID} setting, E3DGS-MAE is competitive with prior point-based and reproduced Gaussian-based architectures, while its main advantage appears under zero-shot transformed evaluation.

Furthermore, concatenating higher-order photometry to the baseline drastically degrades zero-shot performance ($18.28\% \rightarrow 12.89\%$) due to overfitting to pose-correlated appearance. Conversely, E3DGS-MAE successfully assimilates view-dependent photometry to boost accuracy, confirming that spherical harmonics can be efficiently learned under group actions.

\begin{figure}[h]
\centering
\includegraphics[width=0.9\linewidth]{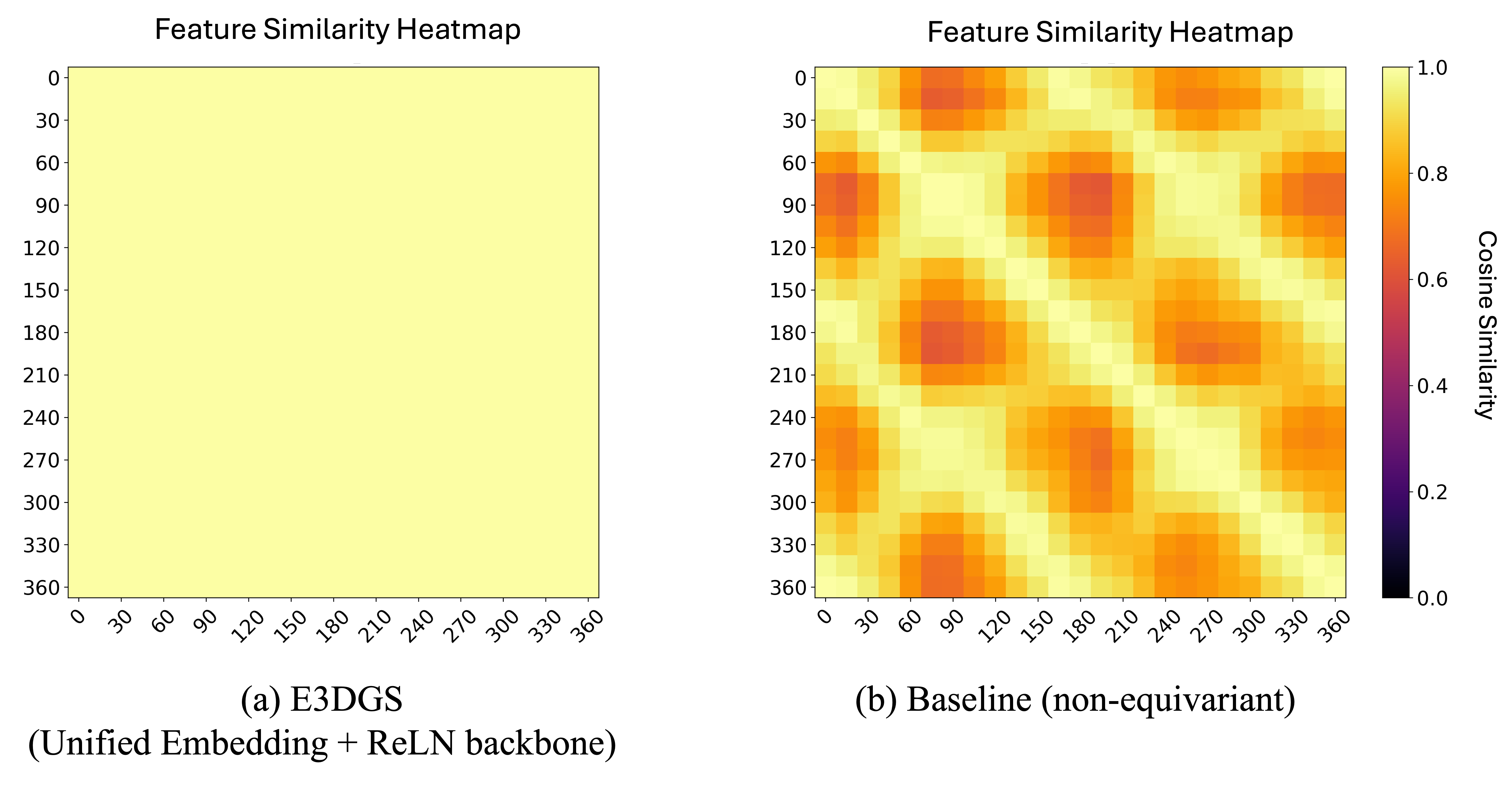}
\caption{\textbf{Pooled invariant descriptor similarity across input rotations.}
Pairwise cosine similarities of pooled invariant descriptors obtained after $\tilde{B}$-based contractions/readout for a test instance rotated $0^\circ$--$360^\circ$. E3DGS (left) remains stable after invariant readout, whereas G-MAE (right) exhibits severe representational drift.}
\label{fig:inv_heatmap}
\end{figure}

\paragraph{Qualitative Analysis of Reconstruction.}
To visually validate this robustness, Figure~\ref{fig:qualitative_mae} compares MAE reconstructions from the test set. While G-MAE succeeds in canonical poses, it collapses under arbitrary spatial rotations, exhibiting failure modes such as color corruption, structural geometry distortion, and missing topology. E3DGS-MAE, however, demonstrates consistent geometric and photometric reconstructions irrespective of the input orientation. This visual stability is quantitatively supported by consistently lower Chamfer Distance (CD) scores (inset in black, \cref{fig:qualitative_mae}), which measure the reconstruction error against the ground truth.

\paragraph{Analysis of Invariant Descriptor Stability.}
We track the globally pooled invariant descriptors obtained from the encoder features after $\tilde{B}$-based contractions, while continuously rotating a test instance from $0^\circ$ to $360^\circ$ (Fig.~\ref{fig:inv_heatmap}). Because the raw matrix-valued channels are equivariant and therefore rotate with the input, the cosine comparison is performed after invariant descriptor extraction. E3DGS-MAE yields a uniform similarity matrix at this invariant readout level, whereas the non-equivariant baseline exhibits angle-dependent representational drift.

\vspace{6pt}
\noindent
\begin{minipage}[t]{0.56\textwidth}
\paragraph{Data Efficiency via Equivariant Inductive Bias.}
Equivariance acts as a structural inductive bias, which typically reduces sample complexity. We evaluate data efficiency by finetuning on restricted subsets ($\{10\%, 30\%, 50\%, 100\%\}$) of the ModelNet10-4K training set. As shown in Figure~\ref{fig:data_eff}, E3DGS consistently outperforms the baseline across all subset ratios. The performance margin is prominent in the low-data regime, indicating that symmetry-consistent coupling prevents the network from overfitting to spurious pose correlations when labels are scarce.
\end{minipage}%
\hfill
\begin{minipage}[t]{0.36\textwidth}
    \vspace{0pt} 
    \centering
    \includegraphics[width=\linewidth]{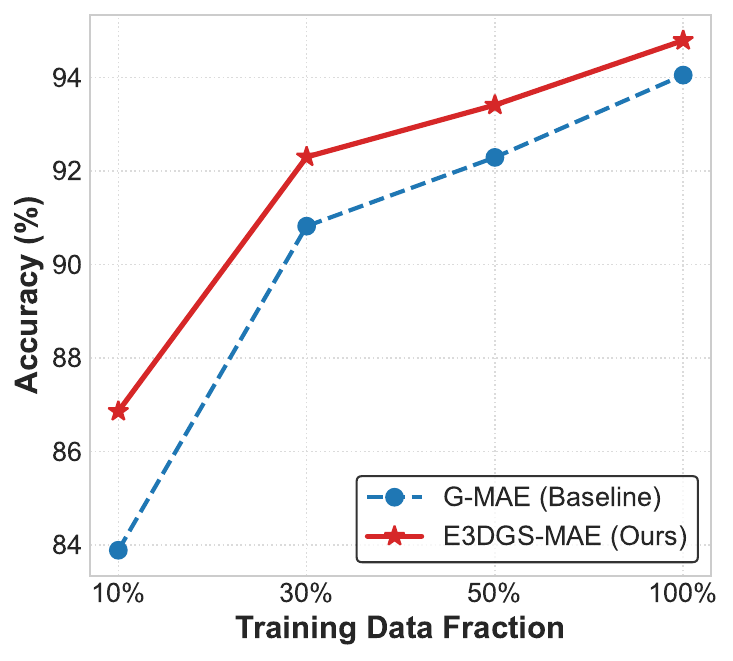}
    \captionof{figure}{\textbf{Data-efficiency.} E3DGS-MAE outperforms the baseline across restricted training subsets.}
    \label{fig:data_eff}
\end{minipage}

\begin{figure}[t]
\centering
\includegraphics[width=\linewidth]{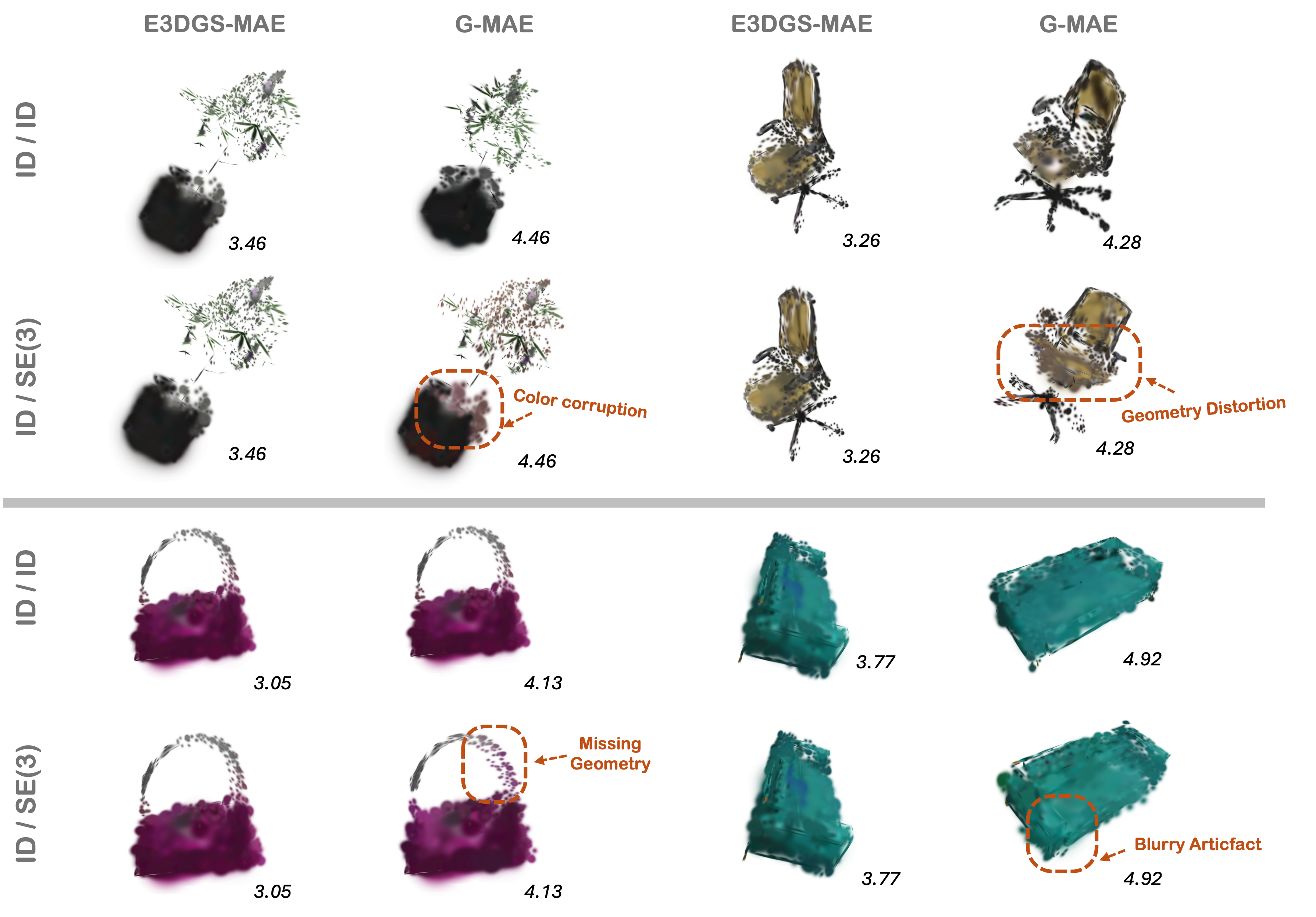} 
\caption{\textbf{Qualitative comparison under out-of-distribution transformations.} Under test-time rotations (\textsc{ID/$\SE(3)$}), the non-equivariant baseline (G-MAE) suffers severe representation collapse, exhibiting color corruption, missing topology, and geometric distortion (orange dashed boxes). Conversely, our equivariant E3DGS-MAE preserves structural fidelity and view-dependent appearance regardless of input orientation. Inset numbers denote Chamfer Distance ($\downarrow$).}
\label{fig:qualitative_mae}
\end{figure}


\subsection{Object Recognition on ModelNet40}
\label{subsec:modelnet40_eval}

We include ModelNet40 to test whether the proposed unified representation remains effective when the category space becomes larger and the object geometry is more diverse. All ModelNet40 experiments use the same pretraining and finetuning protocol as our ModelNet10 experiments (\cref{subsec:shapesplat_results}).

Table~\ref{tab:modelnet40_robust} reports accuracy under canonical and transformed evaluation. The main observation is that canonical accuracy alone does not determine pose robustness: a non-equivariant model can remain competitive on \textsc{ID/ID} while degrading sharply under transformed evaluation, indicating reliance on pose-specific information. By contrast, E3DGS maintains stable performance across all evaluated splits, consistent with the symmetry built into the active feature stream and the invariant classifier.

\paragraph{Interpretation of SH utilization.}
Introducing higher-order SH coefficients, as in SH$(0,1)$ and SH$(0,1,2)$, does not degrade the proposed equivariant representation on ModelNet40. In fact, the SH$(0,1)$ variant achieves the best performance among the E3DGS configurations, reaching $93.17\%$ on all three evaluation splits. The SH$(0,1,2)$ variant also improves over SH$(0)$ in the canonical setting ($92.57\%$ vs. $92.25\%$) while maintaining strong pose consistency.

\paragraph{Implication.}
The main takeaway from ModelNet40 is not only the improved canonical accuracy, but also the ability to maintain performance across transformed splits without relying on pose-specific cues. This is precisely the regime in which naive feature-space concatenation becomes restrictive: higher-order SH coefficients carry transformation-sensitive information, and handling them without an explicit equivariant structure leads to poor robustness once the test-time pose varies. The current ModelNet40 results therefore strongly support the claim that the benefit of the unified carrier lies in the pose-consistent learning of Gaussian geometry and appearance.

\begin{table}[h]
\centering
\caption{\textbf{Classification Accuracy on ModelNet40} (overall $\uparrow$[\%]). \textsc{ID} and \textsc{$\SE(3)$} denote the Identity and Special Euclidean group. Values for E3DGS-MAE and reproduced Gaussian-MAE baselines under spatial transformations are provided to highlight the pose consistency induced by our unified $\mathfrak{gl}(3)$ carrier.}
\label{tab:modelnet40_robust}
\begin{tabular*}{\linewidth}{@{\extracolsep{\fill}}lccc@{}}
\toprule
Method & ID / ID & ID / SE(3) & SE(3) / SE(3) \\ \midrule
\multicolumn{4}{@{}l}{\textit{Supervised Learning Only}} \\ \midrule
PointNet \cite{qi2017pointnet} & 89.20 & \texttimes & \texttimes \\
PointNet++ \cite{qi2017pointnet++} & 91.90 & \texttimes & \texttimes \\
PTv1 \cite{zhao2021point} & 90.60 & \texttimes & \texttimes \\
PTv2 \cite{wu2022point} & 91.60 & \texttimes & \texttimes \\ \midrule
\multicolumn{4}{@{}l}{\textit{with Self-Supervised Pretraining (FULL)}} \\ \midrule
Point-BERT \cite{yu2022pointbert} & 93.20 & \texttimes & \texttimes \\
Point-MAE \cite{pang2022pointmae} & 93.20 & \texttimes & \texttimes \\ \midrule
Gaussian-MAE \cite{ma2024shapesplat}; SH (0) & 92.21 & 6.81 & 90.09 \\ 
Gaussian-MAE \cite{ma2024shapesplat}; SH (0,1) & 92.50 & 3.28 & 91.12 \\ 
Gaussian-MAE \cite{ma2024shapesplat}; SH (0,1,2) & 92.31 & 4.90 & 90.39 \\ \midrule
{E3DGS-MAE (Ours)}; SH (0) & 92.25 & 92.25 & 92.02 \\ 
E3DGS-MAE (Ours); SH (0,1) & 93.17 & 93.17 & 93.17 \\ 
E3DGS-MAE (Ours); SH (0,1,2) & 92.57 & 92.57 & 92.25 \\ \bottomrule
\end{tabular*}
\end{table}

\subsection{Action-Conditioned Gaussian World Modeling on ManiGaussian}
\label{subsec:manigaussian_results}

While \cref{subsec:shapesplat_results} demonstrates the efficacy of our $\mathfrak{gl}(3)$ carrier for object perception, sequential action in robotic manipulation introduces the challenge of modeling environmental dynamics. Here, we evaluate whether our unified geometric--photometric representation improves \emph{predictive world modeling}. We instantiate our equivariant Gaussian world model within the ManiGaussian~\cite{lu2024manigaussian} framework and evaluate it on the RLBench multi-task benchmark.

\paragraph{Equivariant Action-Conditioned World Model in the Lifted Space.}
Given that the Gaussian world state $\mathcal{G}_t$ is lifted into equivariant matrix channels $H_t^{\mathrm{state}}$, we embed the manipulation action into the same algebraic space to preserve symmetry. Manipulation action $a_t = (v_t, q_t, c_t)$ comprises a 3D translation $v_t \in \mathbb{R}^3$, a rotation quaternion $q_t$, and an invariant gripper openness $c_t \in \mathbb{R}$. We convert $q_t$ to an axis-angle vector $\omega_t \in \mathbb{R}^3$. Actions are expressed in the scene/world frame; under a global rotation of the scene frame, both the translational command $v_t$ and the axis-angle vector $\omega_t$ transform as $v_t \mapsto R v_t$ and $\omega_t \mapsto R \omega_t$. Because both $v_t$ and $\omega_t$ transform as standard vectors ($x \mapsto Rx$), we use the hat map to lift them into $\mathfrak{so}(3)$ matrices: $H_t^{\mathrm{act}} = [ \widehat{v}_t, \, \widehat{\omega}_t ]$. Since $\widehat{Rx} = R \widehat{x} R^\top$, these lifted channels natively transform via adjoint conjugation.
We concatenate the state and action matrices to form a joint equivariant input. The world model $f_{\mathrm{wm}}^{\mathrm{eq}}$ processes the concatenated state and action matrices, modulated by the invariant gripper state $c_t$, to predict the state deformation matrix $\Delta H_{t+1}$:
\begin{equation}
    \Delta H_{t+1} = f_{\mathrm{wm}}^{\mathrm{eq}}\big( [H_t^{\mathrm{state}}, \, H_t^{\mathrm{act}}], \, c_t \big).
\end{equation}
This formulation guarantees $\SO(3)$-equivariant dynamics: $\Delta H_{t+1} \mapsto \Ad_R(\Delta H_{t+1})$. The predicted deformation is projected back to $\mathbb{R}^3$ via the inverse hat map and applied as a per-Gaussian offset to the Gaussian means (positions), following ManiGaussian; covariance and SH attributes are carried over unchanged.

\begin{table*}[t]
\centering
\caption{
\textbf{Multi-task Success Rate (\%) on RLBench.} We evaluate 25 episodes per task for the final checkpoint across 10 challenging tasks (166 variations). Our E3DGS equivariant world-model variant is compared against state-of-the-art baselines. Best results are \textbf{bolded} and second-best results are \underline{underlined}.
}
\label{tab:manigaussian_10tasks}
\resizebox{\textwidth}{!}{
\begin{tabular}{l cccccccccc | c}
\toprule
\textbf{Method / Task} & 
\begin{tabular}{@{}c@{}}close \\ jar\end{tabular} & 
\begin{tabular}{@{}c@{}}open \\ drawer\end{tabular} & 
\begin{tabular}{@{}c@{}}sweep to \\ dustpan\end{tabular} & 
\begin{tabular}{@{}c@{}}meat off \\ grill\end{tabular} & 
\begin{tabular}{@{}c@{}}turn \\ tap\end{tabular} & 
\begin{tabular}{@{}c@{}}slide \\ block\end{tabular} & 
\begin{tabular}{@{}c@{}}put in \\ drawer\end{tabular} & 
\begin{tabular}{@{}c@{}}drag \\ stick\end{tabular} & 
\begin{tabular}{@{}c@{}}push \\ buttons\end{tabular} & 
\begin{tabular}{@{}c@{}}stack \\ blocks\end{tabular} & 
\textbf{Average} \\
\midrule
\textit{Variation Type} & \textit{color} & \textit{placement} & \textit{size} & \textit{category} & \textit{placement} & \textit{color} & \textit{placement} & \textit{color} & \textit{color} & \textit{color, count} & -- \\
\textit{\# Variations} & \textit{20} & \textit{3} & \textit{2} & \textit{2} & \textit{2} & \textit{4} & \textit{3} & \textit{20} & \textit{50} & \textit{60} & \textbf{166} \\
\textit{Keyframes} & \textit{6.0} & \textit{3.0} & \textit{4.6} & \textit{5.0} & \textit{2.0} & \textit{4.7} & \textit{12.0} & \textit{6.0} & \textit{3.8} & \textit{14.6} & -- \\
\midrule
PerAct & 18.7 & 54.7 & 0.0 & 40.0 & 38.7 & 18.7 & 2.7 & 5.3 & {18.7} & \underline{6.7} & 20.4 \\
PerAct (4 cameras) & 21.3 & 44.0 & 0.0 & \underline{65.3} & 46.7 & 16.0 & 6.7 & 12.0 & 9.3 & 5.3 & 22.7 \\
GNFactor & \underline{25.3} & \underline{76.0} & 28.0 & 57.3 & 50.7 & \underline{20.0} & 0.0 & \underline{37.3} & {18.7} & 4.0 & 31.7 \\
ManiGaussian & \textbf{28.0} & \underline{76.0} & \underline{64.0} & 60.0 & \underline{56.0} & \textbf{24.0} & \underline{16.0} & \textbf{92.0} & \underline{20.0} & \textbf{12.0} & \underline{44.8} \\
\midrule
\textbf{Ours (E3DGS)} & 16.0 & \textbf{88.0} & \textbf{80.0} & \textbf{72.0} & \textbf{60.0} & \textbf{24.0} & \textbf{36.0} & \textbf{92.0} & \textbf{32.0} & {4.0} & \textbf{50.4} \\
\bottomrule
\end{tabular}
}
\end{table*}

\noindent
\begin{minipage}[t]{0.54\textwidth}
\paragraph{Impact of Equivariant Dynamics.}
Table~\ref{tab:manigaussian_10tasks} details the RLBench success rates. E3DGS achieves a competitive overall average success rate (50.4\% vs. 44.8\%). E3DGS excels in precise geometric articulation and robust spatial modeling, outperforming ManiGaussian in \emph{put in drawer} (+20.0\%), \emph{sweep to dustpan} (+16.0\%), \emph{open drawer} (+12.0\%), and \emph{meat off grill} (+12.0\%). In these rigid-body settings, exact $\SO(3)$ symmetry prevents overfitting to spurious pose correlations. 
Furthermore, as shown in Figure~\ref{fig:dyna_loss}, this structural regularization significantly accelerates convergence during training steps and yields a lower dynamic loss ($\mathcal{L}_{dyna}$). The network predicts state transitions more accurately without augmentation. While the unconstrained baseline remains stronger on contact-heavy or long-horizon tasks such as \emph{close jar} and \emph{stack blocks}, and ties on \emph{slide block} and \emph{drag stick}, E3DGS improves the average success rate and helps several rigid-body or articulation-heavy tasks. Together with the lower dynamic loss in Fig.~\ref{fig:dyna_loss}, these results suggest that the unified equivariant representation improves pose-consistent dynamics modeling.
\end{minipage}%
\hfill
\begin{minipage}[t]{0.42\textwidth}
    \vspace{0pt} 
    \centering
    \includegraphics[width=\linewidth]{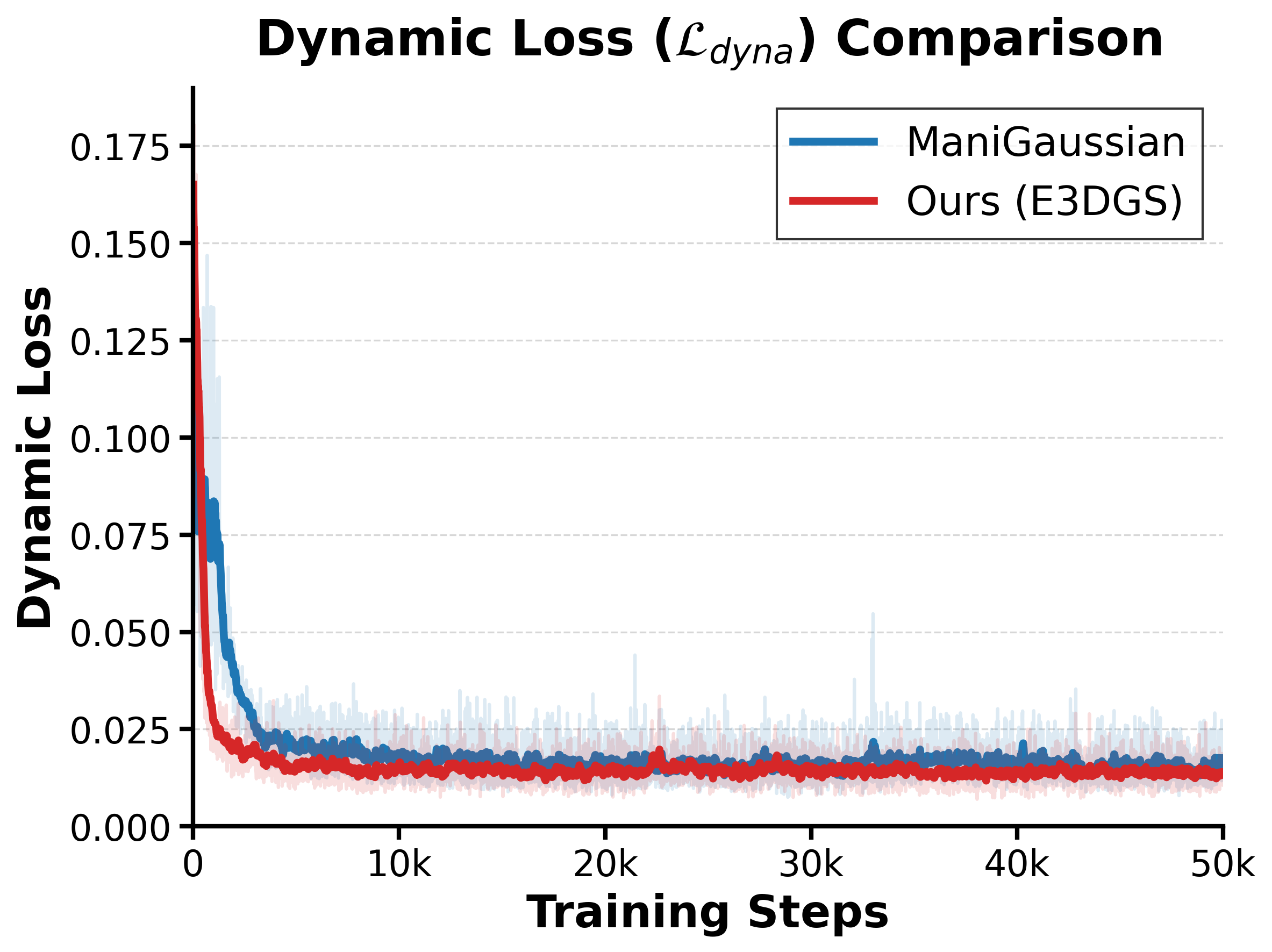}
    \captionof{figure}{\textbf{Dynamic Loss.} E3DGS accelerates the learning of rigid-body state transitions, converging faster to a lower steady-state loss than the baseline.}
    \label{fig:dyna_loss}
\end{minipage}



\section{Conclusion}

We presented \textbf{E3DGS}, a framework introducing rigid-body ($\mathrm{SE}(3)$) equivariance to 3D Gaussian Splatting. At its core is the \emph{Unified Matrix Lifting}, which jointly embeds spatial geometry and view-dependent photometry into a single $\mathfrak{gl}(3)$ carrier. By identifying an isomorphism between Spherical Harmonics ($\ell \le 2$) and matrix conjugation, E3DGS bypasses expensive tensor products, enabling efficient, matrix-native equivariant learning. We validated this formulation via a Gaussian Masked Autoencoder for object recognition and an action-conditioned world model for robotic manipulation. Our results confirm that symmetry-consistent 3DGS learning yields exceptional zero-shot pose robustness, label efficiency, and spatial reasoning.

\paragraph{Scope and outlook.}
Our $\mathfrak{gl}(3)$ formulation is mathematically bounded to SH degrees $\ell\le2$. Although the original 3DGS setting often uses up to degree $3$, SH$(0$--$2)$ captures a substantial photometric subspace and is the maximal bandwidth admitted by the lightweight physical $3\times3$ carrier. Higher bands can be handled by the larger carriers described in Theorem~\ref{thm:general_carrier}, but we leave their empirical evaluation to future work.




%
%
\bibliographystyle{splncs04}
\bibliography{strings-full,ieee-full,main}

\clearpage
\appendix

\section{Explicit $\mathfrak{gl}(3)$ Intertwiners for SH Degrees $\ell \le 2$}
\label{app:sh_intertwiners}

We provide an explicit realization of real spherical harmonics of degrees $\ell=0,1,2$ inside the $3\times 3$ matrix carrier $\mathfrak{gl}(3)$.
Throughout, $R\in\SO(3)$ acts on $\mathfrak{gl}(3)$ by conjugation, $\Ad_R(M):=RMR^\top$.

Under this action,
\begin{equation}
\mathfrak{gl}(3)=\langle I\rangle \oplus \mathfrak{so}(3)\oplus \Sym_0(3),
\label{eq:app_gl3_split}
\end{equation}
where $\langle I\rangle=\{sI:s\in\R\}$, $\mathfrak{so}(3)=\{A:A^\top=-A\}$, and $\Sym_0(3)=\{S:S^\top=S,\ \tr(S)=0\}$.

\subsection{Degree $\ell=0$}
For $f_0\in\R$, define
\begin{equation}
\Phi_0(f_0):=f_0 I.
\end{equation}
Since $D^0(R)=1$ and $\Ad_R(I)=I$, this is an $\SO(3)$-equivariant map.

\subsection{Degree $\ell=1$}
For $f_1\in\R^3$, define
\begin{equation}
\Phi_1(f_1):=\hatop{f_1}\in\mathfrak{so}(3).
\end{equation}
Using $\hatop{Rv}=R\hatop{v}R^\top$ for all $v\in\R^3$, we obtain
\begin{equation}
\Phi_1(D^1(R)f_1)=R\,\Phi_1(f_1)\,R^\top.
\end{equation}

\subsection{Degree $\ell=2$}
Choose the following Frobenius-orthonormal basis of $\Sym_0(3)$:
\begin{align}
B_1 &= \frac{1}{\sqrt{2}}
\begin{bmatrix}
0&1&0\\
1&0&0\\
0&0&0
\end{bmatrix},
&
B_2 &= \frac{1}{\sqrt{2}}
\begin{bmatrix}
0&0&1\\
0&0&0\\
1&0&0
\end{bmatrix},
&
B_3 &= \frac{1}{\sqrt{2}}
\begin{bmatrix}
0&0&0\\
0&0&1\\
0&1&0
\end{bmatrix},\\[0.5em]
B_4 &= \frac{1}{\sqrt{2}}
\begin{bmatrix}
1&0&0\\
0&-1&0\\
0&0&0
\end{bmatrix},
&
B_5 &= \frac{1}{\sqrt{6}}
\begin{bmatrix}
-1&0&0\\
0&-1&0\\
0&0&2
\end{bmatrix}.
\end{align}
Fix a SH basis for degree $2$ whose coefficient vector is identified with the coefficients in this basis.\footnote{Any other orthonormal SH convention differs only by a fixed orthogonal change of basis, so the equivariance statement below is unchanged.}
For $f_2=(a_1,\dots,a_5)^\top\in\R^5$, define
\begin{equation}
\Phi_2(f_2):=\sum_{m=1}^{5} a_m B_m \in \Sym_0(3).
\label{eq:app_phi2}
\end{equation}
The induced action on these coefficients is precisely the Wigner-$D$ action for $\ell=2$, hence
\begin{equation}
\Phi_2(D^2(R)f_2)=R\,\Phi_2(f_2)\,R^\top.
\end{equation}

\paragraph{Implementation convention for 3DGS real SH coefficients.}
We use the real-SH coefficient ordering of the reference 3DGS utilities. For each
color channel, the coefficients are stored as
\begin{equation}
\label{eq:3dgs-sh-order}
g =
\bigl(
g_0
\,\big|\,
g_1,g_2,g_3
\,\big|\,
g_4,g_5,g_6,g_7,g_8
\bigr).
\end{equation}
Here \(g_0\) is the degree-\(0\) coefficient. The degree-\(1\) slots follow the
code order proportional to
\begin{equation}
\label{eq:3dgs-sh-slots}
\begin{aligned}
(g_1,g_2,g_3)
&\longleftrightarrow (-y,\ z,\ -x),\\
(g_4,\ldots,g_8)
&\longleftrightarrow
\bigl(
xy,\ yz,\ 2z^2-x^2-y^2,\ xz,\ x^2-y^2
\bigr).
\end{aligned}
\end{equation}
For compact notation, write \(C_{2,j}:=C_2[j]\). We use the 3DGS real-SH
constants
\begin{equation}
\label{eq:3dgs-sh-constants}
\begin{gathered}
C_1 = 0.4886025,\\
C_{2,0}=1.0925484,\quad
C_{2,1}=-1.0925484,\quad
C_{2,2}=0.3153916,\\
C_{2,3}=-1.0925484,\quad
C_{2,4}=0.5462742 .
\end{gathered}
\end{equation}

With this convention, the degree-\(1\) vector used by \(\Phi_1\) is
\begin{equation}
\label{eq:3dgs-phi1-vector}
u =
\begin{bmatrix}
-C_1 g_3\\
-C_1 g_1\\
\phantom{-}C_1 g_2
\end{bmatrix},
\qquad
\Phi_1(g_1,g_2,g_3)=[u]_\times .
\end{equation}
We use the hat-map convention
\begin{equation}
\label{eq:hat-map-convention}
[u]_\times =
\begin{bmatrix}
0 & -u_z & u_y\\
u_z & 0 & -u_x\\
-u_y & u_x & 0
\end{bmatrix},
\qquad
[u]_\times w = u\times w .
\end{equation}

For degree \(2\), we convert the 3DGS coefficient order to the STF basis
coefficients \(a_1,\ldots,a_5\) used in \(\eqref{eq:app_phi2}\) by matching the
quadratic form \(d^\top S d\):
\begin{equation}
\label{eq:3dgs-stf-coefficients}
\begin{aligned}
a_1 &= \frac{C_{2,0}}{\sqrt{2}}\,g_4,\\
a_2 &= \frac{C_{2,3}}{\sqrt{2}}\,g_7,\\
a_3 &= \frac{C_{2,1}}{\sqrt{2}}\,g_5,\\
a_4 &= \sqrt{2}\,C_{2,4}\,g_8,\\
a_5 &= \sqrt{6}\,C_{2,2}\,g_6 .
\end{aligned}
\end{equation}
Thus
\begin{equation}
\label{eq:3dgs-phi2-implementation}
\Phi_2(g_4,\ldots,g_8)
=
\sum_{m=1}^{5} a_m B_m .
\end{equation}
Equivalently, for \(d=(x,y,z)^\top\) and \(S=\Phi_2(g_4,\ldots,g_8)\), this choice
reproduces the renderer-side degree-\(2\) polynomial
\begin{equation}
\label{eq:3dgs-degree2-quadratic}
\begin{aligned}
d^\top S d
={}&
C_{2,0}g_4\,xy
+
C_{2,1}g_5\,yz
+
C_{2,2}g_6\,(2z^2-x^2-y^2)\\
&+
C_{2,3}g_7\,xz
+
C_{2,4}g_8\,(x^2-y^2).
\end{aligned}
\end{equation}

We numerically verify the full lift using the same real-SH convention for both
coefficient rotation and lifting. With
\(D^{\leq 2}(R)=D^0(R)\oplus D^1(R)\oplus D^2(R)\), sampled rotations satisfy
\begin{equation}
\label{eq:3dgs-intertwiner-check}
\begin{aligned}
\epsilon(R,f)
&:=
\left\|
\Phi_{\leq 2}\!\left(D^{\leq 2}(R)f\right)
-
\operatorname{Ad}_R\!\left(\Phi_{\leq 2}(f)\right)
\right\|_F,\\
\epsilon(R,f)
&\approx 10^{-15}.
\end{aligned}
\end{equation}
If a different SH ordering or normalization is used, the constants and index
permutation should be replaced by the corresponding fixed change-of-basis matrix.
The equivariance statement is unchanged as long as the same convention is used for
both \(D^\ell(R)\) and \(\Phi_\ell\).

\subsection{Combined degree-$\le 2$ lift}
Combining the three cases gives
\begin{equation}
\Phi_{\le 2}:=\Phi_0\oplus\Phi_1\oplus\Phi_2,
\end{equation}
which satisfies
\begin{equation}
\Phi_{\le 2}\!\bigl(D^{\le 2}(R)f_{\le 2}\bigr)
=
R\,\Phi_{\le 2}(f_{\le 2})\,R^\top.
\label{eq:app_phi_le2}
\end{equation}
This is the intertwiner used throughout the paper.

\section{Exact Capacity of the $3 \times 3$ Carrier and Its Alignment with SH(0--2)}
\label{app:sh_capacity}

Let $V_1 = \mathbb{R}^3$ denote the standard irreducible representation of $\SO(3)$.
Under the conjugation action, the space of $3 \times 3$ matrices is identified with the endomorphism ring:
\begin{equation}
\End(V_1) \cong V_1 \otimes V_1 .
\end{equation}
By the Clebsch--Gordan decomposition, this tensor product decomposes into irreducible subrepresentations as:
\begin{equation}
V_1 \otimes V_1 \cong V_0 \oplus V_1 \oplus V_2 .
\end{equation}
Therefore, the $3 \times 3$ conjugation carrier realizes exactly the irreducible types corresponding to spherical harmonics of degrees $\ell=0,1,2$.
Notably, no $V_3$ component appears in this decomposition. Hence, by Schur's Lemma, we have:
\begin{equation}
\Hom_{\SO(3)}(V_3,\End(\mathbb{R}^3)) = \{0\},
\end{equation}
which formally proves that there exists no non-trivial $\SO(3)$-equivariant linear map from degree-3 spherical harmonics into the $3 \times 3$ conjugation carrier.
The $3\times3$ carrier covers SH degrees $0\le \ell\le2$; higher degrees used by practical 3DGS systems are handled by the larger carriers of \cref{thm:general_carrier}.

\section{General Matrix Carrier for Arbitrary SH Degrees}
\label{app:general_carrier}

We show that the $3\times3$ instantiation used in the main paper is the \emph{minimal} case of a general construction: for any finite SH degree, the Wigner-$D$ action is realizable exactly as matrix conjugation on a suitable carrier $\End(V_k)$. This establishes that the SH$(0\text{--}2)$ scope is a property of the minimal carrier, not of the adjoint-matrix principle (\cref{thm:general_carrier}).

\paragraph{Setup.}
Let $V_\ell$ denote the real degree-$\ell$ irreducible representation of $\SO(3)$ (equivalently, degree-$\ell$ real SH coefficients), $\dim V_\ell=2\ell+1$, transforming as $c_\ell\mapsto D^\ell(R)c_\ell$. In a real orthonormal SH basis $D^j(R)$ is orthogonal, hence $V_j^\ast\simeq V_j$ and
\begin{equation}
\End(V_k)\;\simeq\; V_k\otimes V_k^\ast\;\simeq\; V_k\otimes V_k
\;\simeq\;\bigoplus_{\ell=0}^{2k}V_\ell ,
\label{eq:end_vk_decomp}
\end{equation}
by the Clebsch--Gordan rule $V_{j_1}\otimes V_{j_2}\simeq\bigoplus_{\ell=|j_1-j_2|}^{j_1+j_2}V_\ell$~\cite{varshalovich1988angularmomentum,edmonds1996angular}. The carrier used in this paper is $k=1$: $\End(V_1)\simeq\mathfrak{gl}(3)\simeq V_0\oplus V_1\oplus V_2$, with the conjugation action $M\mapsto D^1(R)MD^1(R)^\top=RMR^\top$.

\paragraph{Equivariant injection for any degree.}
By \eqref{eq:end_vk_decomp}, for any finite SH degree $L$ choosing $k\ge\lceil L/2\rceil$ makes $V_L$ appear in $\End(V_k)$. Concretely, one selects an irreducible tensor-operator basis $\{T^{(\ell,k)}_m\}_{m=-\ell}^{\ell}\subset\End(V_k)$ satisfying the standard transformation rule~\cite{varshalovich1988angularmomentum,edmonds1996angular}
\begin{equation}
D^k(R)\,T^{(\ell,k)}_m\,D^k(R)^{-1}=\sum_{m'=-\ell}^{\ell}T^{(\ell,k)}_{m'}\,D^\ell_{m'm}(R),
\label{eq:ito_transform}
\end{equation}
which defines an equivariant injection $\iota_\ell:V_\ell\hookrightarrow\End(V_k)$, $\iota_\ell(c)=\sum_m c_m T^{(\ell,k)}_m$. A direct computation then gives the intertwining property
\begin{equation}
D^k(R)\,\iota_\ell(c)\,D^k(R)^{-1}
=\sum_{m,m'}c_m T^{(\ell,k)}_{m'}D^\ell_{m'm}(R)
=\iota_\ell\!\bigl(D^\ell(R)c\bigr),
\end{equation}
i.e.\ the Wigner-$D^\ell$ action is realized exactly as conjugation on the matrix carrier. This proves \cref{thm:general_carrier}.

\paragraph{Example: SH$3$.} $V_3$ is absent from $\End(V_1)\simeq\mathfrak{gl}(3)$, but
\begin{equation}
\End(V_2)\simeq V_2\otimes V_2\simeq V_0\oplus V_1\oplus V_2\oplus V_3\oplus V_4 ,
\end{equation}
so $V_3$ appears with multiplicity one; the Clebsch--Gordan change of basis yields an equivariant inclusion $\Phi_3:V_3\hookrightarrow\End(V_2)$, unique up to scale, with $\Phi_3(D^3(R)f_3)=D^2(R)\Phi_3(f_3)D^2(R)^{-1}$. Thus SH$3$ is exactly represented by conjugation in a $5\times5$ carrier $D^2(R)$ rather than the physical $3\times3$ carrier $R$. In general SH degrees up to $L$ are handled by $\End(V_{\lceil L/2\rceil})$.

\paragraph{Numerical verification.}
We solved the infinitesimal intertwiner equations as a sanity check. For $X\in\mathfrak{so}(3)$ the induced action on $\End(V_k)$ is $\rho_{\End(V_k)}(X)M=\rho_{V_k}(X)M-M\rho_{V_k}(X)$, and an intertwiner $\Phi:V_3\to\End(V_k)$ must satisfy $\rho_{\End(V_k)}(X)\Phi=\Phi\,\rho_{V_3}(X)$. Solving these linear constraints gives
\begin{equation}
\dim\Hom_{\SO(3)}(V_3,\End(V_1))=0,\qquad
\dim\Hom_{\SO(3)}(V_3,\End(V_2))=1 .
\end{equation}
The resulting $V_3\to\End(V_2)$ map has rank $7$, and a finite-rotation test yields relative equivariance error $1.68\times10^{-15}$, numerically confirming that SH$3$ is impossible in the $3\times3$ carrier but exact in the $5\times5$ adjoint carrier.

\section{Unified Lifting and Symmetry Properties}
\label{app:unified_lifting}

\subsection{Primitive parameterization and centered coordinates}
We parameterize a 3D Gaussian primitive as
\begin{equation}
\mathcal{G}_i=
\Bigl(
\mu_i,\Sigma_i,\alpha_i,\{f_{i,\ell}^{(c)}\}_{\ell=0}^{2,\ c\in\{r,g,b\}}
\Bigr),
\end{equation}
where $\mu_i\in\R^3$, $\Sigma_i\in\SPD(3)$, $\alpha_i\in\R$, and $f_{i,\ell}^{(c)}\in\R^{2\ell+1}$ denotes the real SH coefficients of degree $\ell$ for color channel $c$.

For recognition tasks, global translations are removed by centering:
\begin{equation}
\bar{\mu}_i:=\mu_i-\frac{1}{N}\sum_{j=1}^{N}\mu_j.
\label{eq:app_centered_mu}
\end{equation}
Under a rigid motion $(R,t)\in\SE(3)$, the centered means satisfy $\bar{\mu}_i\mapsto R\bar{\mu}_i$, so the active channels become homogeneous under the rotational part.

\subsection{Geometry lift}
\paragraph{Position.}
We lift the centered position via the hat map,
\begin{equation}
P_i:=\hatop{\bar{\mu}_i}\in\mathfrak{so}(3),
\end{equation}
which satisfies
\begin{equation}
\hatop{R\bar{\mu}_i}=R\hatop{\bar{\mu}_i}R^\top.
\label{eq:app_hat_centered}
\end{equation}

\paragraph{Covariance.}
We lift the covariance through the matrix logarithm,
\begin{equation}
C_i:=\log(\Sigma_i)\in \Sym(3)\subset\mathfrak{gl}(3).
\label{eq:app_cov_lift}
\end{equation}
Since the matrix logarithm commutes with orthogonal congruence,
\begin{equation}
\log(R\Sigma_iR^\top)=R\log(\Sigma_i)R^\top.
\label{eq:app_cov_equiv}
\end{equation}
This preserves exact $\SO(3)$ equivariance while mapping $\SPD(3)$ to the linear space $\Sym(3)$.

\subsection{Photometry lift}
For each color channel $c\in\{r,g,b\}$, define
\begin{equation}
S_i^{(c)}:=\Phi_{\le 2}\!\bigl(f_{i,\le 2}^{(c)}\bigr)\in\mathfrak{gl}(3),
\qquad
f_{i,\le 2}^{(c)}:=\bigl(f_{i,0}^{(c)},f_{i,1}^{(c)},f_{i,2}^{(c)}\bigr).
\label{eq:app_photometry_lift}
\end{equation}
By \eqref{eq:app_phi_le2}, each photometric matrix channel transforms by the same conjugation law as the geometric channels.

\subsection{Equivariant and invariant feature streams}
We collect the active matrix-valued channels as
\begin{equation}
H_i:=
\bigl[
P_i,\ C_i,\ S_i^{(r)},\ S_i^{(g)},\ S_i^{(b)}
\bigr]
\in \mathfrak{gl}(3)^{C_{\mathrm{eq}}}.
\end{equation}
Alongside them, we maintain a separate invariant scalar stream
\begin{equation}
s_i:=
\bigl[
\alpha_i,\ f_{i,0}^{(r)},\ f_{i,0}^{(g)},\ f_{i,0}^{(b)},\ E_{\mathrm{task}}
\bigr]^\top
\in\R^{C_0}.
\label{eq:app_scalar_branch}
\end{equation}
The degree-$0$ appearance coefficients are intentionally duplicated in the scalar branch: although they are already represented inside the photometric lift through $\Phi_0$, the explicit scalar copy gives the network direct invariant access for gating and readout.

\begin{proposition}[Equivariance of the lifted representation]
\label{prop:app_lift_equiv}
For any $R\in\SO(3)$, each active channel of $H_i$ transforms as
\[
(H_i)_k \mapsto \Ad_R\bigl((H_i)_k\bigr)=R(H_i)_kR^\top,
\]
while $s_i$ is invariant.
\end{proposition}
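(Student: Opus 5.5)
The plan is to verify the transformation law channel by channel, using the definition $H_i=[P_i,C_i,S_i^{(r)},S_i^{(g)},S_i^{(b)}]$. The rigid motion $(R,t)$ acts on the primitive by $\mu_i\mapsto R\mu_i+t$, $\Sigma_i\mapsto R\Sigma_iR^\top$, $\alpha_i\mapsto\alpha_i$ and $f^{(c)}_{i,\ell}\mapsto D^\ell(R)f^{(c)}_{i,\ell}$. Since conjugation acts channel-wise, it suffices to check each of the five channels separately and then the scalar stream $s_i$.

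For the position channel, I will first note that centering removes translation. Averaging over $j$ gives $\frac1N\sum_j(R\mu_j+t)=R\frac1N\sum_j\mu_j+t$, so \eqref{eq:app_centered_mu} yields $\bar\mu_i\mapsto R\bar\mu_i$. The identity \eqref{eq:app_hat_centered} then gives $P_i\mapsto R P_iR^\top$. If one wants to justify $\hatop{Rv}=R\hatop{v}R^\top$, it follows from $R(v\times R^\top w)=(Rv)\times w$ for $R\in\SO(3)$; here $\det R=1$ is what is used.

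For the covariance channel, \eqref{eq:app_cov_equiv} is the needed fact. To justify it, diagonalize $\Sigma_i=U\Lambda U^\top$ with $\Lambda$ having positive entries. Then $R\Sigma_iR^\top=(RU)\Lambda(RU)^\top$ with $RU$ orthogonal. Hence the principal logarithm of $R\Sigma_iR^\top$ is $(RU)\log\Lambda(RU)^\top=R\log(\Sigma_i)R^\top$. This step is where uniqueness of the real logarithm on $\SPD(3)$ enters, and it is the only mildly delicate point.

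For the three photometric channels, the transformation law follows immediately from the combined intertwining identity \eqref{eq:app_phi_le2}, i.e.\ \cref{thm:sh_to_gl3_intertwiner}, applied to $f^{(c)}_{i,\le2}$. For this to hold, the rotated coefficients must be $D^{\le2}(R)f^{(c)}_{i,\le2}$ under the same SH convention used to build $\Phi_{\le2}$. This is the active-rotation convention fixed in the experiments section.

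Finally, $s_i$ is invariant for three reasons. The opacity $\alpha_i$ is unchanged by rigid motion. Each $f^{(c)}_{i,0}$ transforms by $D^0(R)=1$. The task embedding $E_{\mathrm{task}}$ does not depend on the scene frame. The main obstacle is not analytic but one of conventions: the SH coefficients and $D^\ell$ must be taken in the same basis and with the same active/passive convention. Otherwise the identity holds only with $R$ replaced by $R^\top$, so I will state that consistency assumption explicitly.
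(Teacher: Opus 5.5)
Your proposal is correct and follows the same route as the paper, which checks the law channel by channel by citing the hat-map identity \eqref{eq:app_hat_centered}, the log--congruence identity \eqref{eq:app_cov_equiv}, and the intertwining property \eqref{eq:app_phi_le2}; you also justify each identity, the centering step, and the invariance of $s_i$, all of which the paper leaves implicit. One small precision: your covariance step needs the uniqueness of the \emph{symmetric} (principal) logarithm, i.e.\ that $\log:\SPD(3)\to\Sym(3)$ is well defined and bijective, not uniqueness of real logarithms in general (for example, $I$ also has non-symmetric real logarithms such as $2\pi$ times a skew generator).
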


\begin{proof}
The claim follows from \eqref{eq:app_hat_centered}, \eqref{eq:app_cov_equiv}, and \eqref{eq:app_phi_le2}.
\end{proof}

\subsection{Equivariant processing on the unified carrier}
Let $\tilde{B}:\mathfrak{gl}(3)\times\mathfrak{gl}(3)\to\R$ denote the modified Killing form
\begin{equation}
\tilde{B}(X,Y):=6\,\tr(XY)-\tr(X)\tr(Y).
\label{eq:app_modified_killing}
\end{equation}
This bilinear form is invariant under conjugation:
\[
\tilde{B}(\Ad_R X,\Ad_R Y)=\tilde{B}(X,Y).
\]

The E3DGS backbone is built from $\Ad$-equivariant channel mixing, invariant-gated nonlinearities, ReLN-Attention, and ReLN-LayerNorm. Since each primitive commutes with the channel-wise conjugation action and all gating quantities are built from $\SO(3)$-invariant scalars, the backbone preserves exact $\SO(3)$ equivariance on the active channels.

\begin{theorem}[Equivariance of the lifted encoder]
\label{thm:app_encoder_equiv}
Let $\mathcal{F}_\theta(H,s)=(H',s')$ be any encoder composed of $\Ad$-equivariant matrix operations together with invariant scalar modulation. Then
\begin{equation}
\mathcal{F}_\theta(\Ad_R(H),\,s)=\bigl(\Ad_R(H'),\,s'\bigr)
\qquad \forall R\in\SO(3),
\end{equation}
where $\Ad_R$ acts channel-wise on the active matrix stream.
\end{theorem}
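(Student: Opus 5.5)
The argument is a structural induction over the layers of $\mathcal{F}_\theta$. Each primitive is shown to be $\SO(3)$-equivariant on the matrix stream and invariant on the scalar stream. The theorem then follows because a composition of equivariant maps is equivariant. Concretely, I would write $\mathcal{F}_\theta=\mathcal{L}_T\circ\cdots\circ\mathcal{L}_1$, where each $\mathcal{L}_t:(H,s)\mapsto(H^{+},s^{+})$ satisfies
\begin{equation}
\mathcal{L}_t(\Ad_R H,\,s)=\bigl(\Ad_R H^{+},\,s^{+}\bigr).
\end{equation}
Given this property for every layer, induction on $t$ yields the claim. The base case is \cref{prop:app_lift_equiv}: the lifted input already transforms channel-wise by $\Ad_R$, and $s$ is invariant.

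The per-primitive verification splits into three kinds of operation.

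\emph{Linear channel mixing} (ReLN-Linear) has the form $H_c^{+}=\sum_{c'}W_{cc'}H_{c'}$ with scalar weights $W_{cc'}$. Conjugation is linear, so $\sum_{c'}W_{cc'}RH_{c'}R^\top=R\bigl(\sum_{c'}W_{cc'}H_{c'}\bigr)R^\top$.

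\emph{Invariant-producing maps} include the forms $\tilde B(X,Y)$ and $\|X\|_F$. For $\tilde B$, cyclicity of the trace gives $\tr(RXR^\top RYR^\top)=\tr(XY)$ and $\tr(RXR^\top)=\tr(X)$. For the Frobenius norm, $\|RXR^\top\|_F=\|X\|_F$ because $R$ is orthogonal. Any scalar function of these invariants and of $s$ is therefore invariant. This covers the updated scalars $s^{+}$, the attention logits and their softmax, and the LayerNorm statistics.

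\emph{Invariant-modulated updates} cover gating, ReLN-ReLU, ReLN-LayerNorm, attention aggregation $\sum_j a_{ij}V_j$, and pooling. Each has the form $H^{+}=\sum_k \lambda_k H_k$, where the coefficients $\lambda_k$ are invariant scalars. Equivariance then follows from linearity of $\Ad_R$ together with $\lambda_k(\Ad_R H,s)=\lambda_k(H,s)$.

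The step needing the most care is ReLN-ReLU. It is not literally a scalar rescaling. It typically uses a learned direction $K=\sum W'H$ and, when $\tilde B(X,K)<0$, subtracts a projection of the form $\frac{\tilde B(X,K)}{\tilde B(K,K)}K$. I would handle it by noting that $K$ is itself equivariant, by the linear-mixing case. The update is then an invariant-coefficient combination of the equivariant matrices $X$ and $K$, which places it in the third kind of operation above.

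Two further points need attention. First, $\tilde B$ is indefinite, so the ratio must be regularized. Using $|\tilde B(K,K)|+\epsilon$, itself an invariant, preserves the argument.

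Second, I must ensure that nothing in the architecture acts entrywise on matrix channels, or adds a fixed non-invariant matrix such as a bias term, since either would break equivariance. Adding multiples of $I$ is allowed, since $\Ad_R(I)=I$. I would state this restriction explicitly as the hypothesis implicit in ``composed of $\Ad$-equivariant matrix operations.''

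Finally, translation invariance of the full $\SE(3)$ statement comes from the centering in \eqref{eq:app_centered_mu}. This lies outside the encoder and needs no further argument here.
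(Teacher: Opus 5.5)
Your proposal is correct and follows the same route as the paper. The paper's argument is that every backbone primitive (channel mixing, invariant-gated nonlinearities, ReLN-Attention, ReLN-LayerNorm) commutes with channel-wise conjugation and every gate is built from $\SO(3)$-invariant scalars such as $\tilde B$, so their composition is equivariant; your version spells each primitive out (including the ReLN-ReLU projection and the regularization needed because $\tilde B$ is indefinite), and one nit is that the induction's base case is the empty composition, so \cref{prop:app_lift_equiv} is not needed for this statement.
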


\paragraph{Translation handling.}
The exact homogeneous action in the lifted carrier is with respect to the rotational part.
For recognition, centering \eqref{eq:app_centered_mu} yields global translation invariance.
For dynamics, translational quantities are handled through centered or relative vector features before hat lifting, so that the active channels again obey a homogeneous $\SO(3)$ action.

\subsection{Task-specific readouts}
\paragraph{Invariant readout for recognition.}
For classification, the final matrix-valued features are converted to invariant scalars by $\tilde{B}$-based contractions, concatenated with the invariant branch, pooled across the set, and passed to an MLP classifier. This readout is pose-invariant by construction.

\paragraph{Equivariant readout for dynamics.}
For vector-valued predictions, the output matrix channel is projected onto $\mathfrak{so}(3)$ and mapped back to $\R^3$ by the inverse hat map:
\begin{equation}
v_i=\Bigl(\mathrm{skew}\bigl((H_{\mathrm{out}})_{\mathrm{pos}}\bigr)\Bigr)^\vee,
\qquad
\mathrm{skew}(A):=\tfrac12(A-A^\top).
\label{eq:app_inverse_hat}
\end{equation}
Because both $\mathrm{skew}$ and $(\cdot)^\vee$ commute with orthogonal conjugation, this readout is exactly $\SO(3)$ equivariant.

\section{Active and Passive Rotation Viewpoints}
\label{app:se3_viewpoints}

Under an active rotation of the scene, view directions transform as $d\mapsto Rd$, inducing
\begin{equation}
f_{i,\ell}^{(c)}\mapsto D^\ell(R)\,f_{i,\ell}^{(c)}.
\end{equation}
Under a passive change of camera coordinates, directions transform as $d\mapsto R^\top d$, inducing
\begin{equation}
f_{i,\ell}^{(c)}\mapsto D^\ell(R^\top)\,f_{i,\ell}^{(c)}.
\end{equation}
In a orthonormal SH basis, $D^\ell(R)$ is orthogonal, so $D^\ell(R^\top)=D^\ell(R)^{-1}$.
Our matrix carrier accommodates either convention: the lifted channels transform by conjugation with the corresponding rotation argument.


\section{Implementation Details for Gaussian Perception Tasks}
\label{app:3dgs_details}

This section summarizes the implementation details for the Gaussian perception tasks.
We focus on the components that are specific to the proposed formulation: rigid-motion-aware input handling, SH$(0\text{--}2)$ lifting, the unified matrix carrier, and the task heads used for classification and segmentation.

\subsection{Recognition backbone and feature construction}
Starting from Gaussian-MAE, we replace the original Euclidean feature handling with a two-stream representation consisting of:
(i) an active matrix stream carrying geometry and higher-order photometry, and
(ii) an invariant scalar stream carrying opacity, SH0, and auxiliary scalar information.

Each input Gaussian uses SH coefficients up to degree $2$ for every color channel unless otherwise noted. Accordingly, the active photometric stream contains the lifted degree-$1$ and degree-$2$ appearance content. Local grouping, token mixing, and aggregation are performed directly on matrix-valued channels. The scalar stream interacts with the matrix stream through invariant contractions and scalar gating, so the prescribed transformation law of the active channels is preserved throughout the encoder.

\subsection{Task heads}

\paragraph{Classification head.}
For object classification, the final matrix features are converted to invariant scalar descriptors using $\tilde{B}$-based self-contractions and aggregated across tokens by permutation-invariant global pooling. The pooled invariant descriptor is then passed to a standard MLP classifier. We do not use a learnable \texttt{[CLS]} token, since the prediction is intended to depend only on invariant summaries of the lifted set representation.

\paragraph{Segmentation head.}
For part segmentation, we replace the Gaussian token encoder and backbone. The ReLN encoder and transformer produce matrix-valued group features, which are converted to invariant token descriptors through Killing-form self-contractions. These invariant group features are then propagated to point-level features by the standard feature-propagation module and decoded by a point-wise classifier for part labeling.

\subsection{Datasets and Protocol}
\label{app:3dgs_datasets}
To evaluate representation quality, we employ a two-phase learning stage: large-scale unsupervised pretraining followed by task-specific finetuning~\cite{ma2024shapesplat}.

\paragraph{Pretraining: ShapeSplat.}
We utilize the ShapeSplat dataset~\cite{ma2024shapesplat}, a repository of ShapeNet-derived 3D Gaussian assets. For each 3D object, $N=1024$ primitives are extracted using furthest point sampling. The model is trained to reconstruct these primitives under a $60\%$ random masking ratio.

\paragraph{Downstream evaluation on ModelNet10, ModelNet40, and ShapeNet Part.}
The pretrained encoder is finetuned for object classification on ModelNet10 and ModelNet40, and for part segmentation on ShapeNet Part.
We keep the transfer configuration fixed across methods so that differences in performance can be attributed to the representation rather than to changes in the optimization setup.

\subsection{Transformed evaluation protocol}
We report accuracy on the canonical split and on transformed splits.
For recognition, Gaussian means are centered before position lifting, so the active channels are rotation-equivariant and the full pipeline is translation-invariant at the set level.

If the transformed split applies rigid motions, the Gaussian parameters are updated as
\begin{align}
\mu_i' &= R\mu_i+t,\\
\Sigma_i' &= R\Sigma_iR^\top,\\
f_{i,\ell}^{(c)\prime} &= D^\ell(R)\,f_{i,\ell}^{(c)},
\qquad \ell\in\{0,1,2\}.
\end{align}

\paragraph{Evaluation splits.}
We report accuracy on the following settings:
\begin{itemize}
    \item {\textsc{ID/ID}:} canonical training and canonical testing.
    \item {\textsc{ID/$\SE(3)$}:} canonical training and transformed testing.
    \item {\textsc{$\SE(3)$/$\SE(3)$}:} transformed training and transformed testing.
\end{itemize}

\subsection{Pretraining convergence}
\label{app:gaussian_pretraining_note}

Figure~\ref{fig:loss_convergence} compares the masked pretraining dynamics of the baseline Gaussian-MAE and E3DGS on ShapeSplat. Across all tracked losses, E3DGS converges faster in the early stage of training and reaches a lower final error than the baseline. This trend is consistent across rotation, scale, density, Chamfer distance, spherical-harmonic reconstruction, and total training loss. These results support the role of the unified equivariant formulation as an inductive bias for Gaussian representation learning. Once anisotropic geometry and higher-order appearance channels are represented in transformation-compatible carriers, optimization proceeds in a feature space aligned with the symmetry of the data, leading to more stable and consistently lower reconstruction error throughout training.

\begin{figure*}[h]
    \centering
    \includegraphics[width=0.95\linewidth]{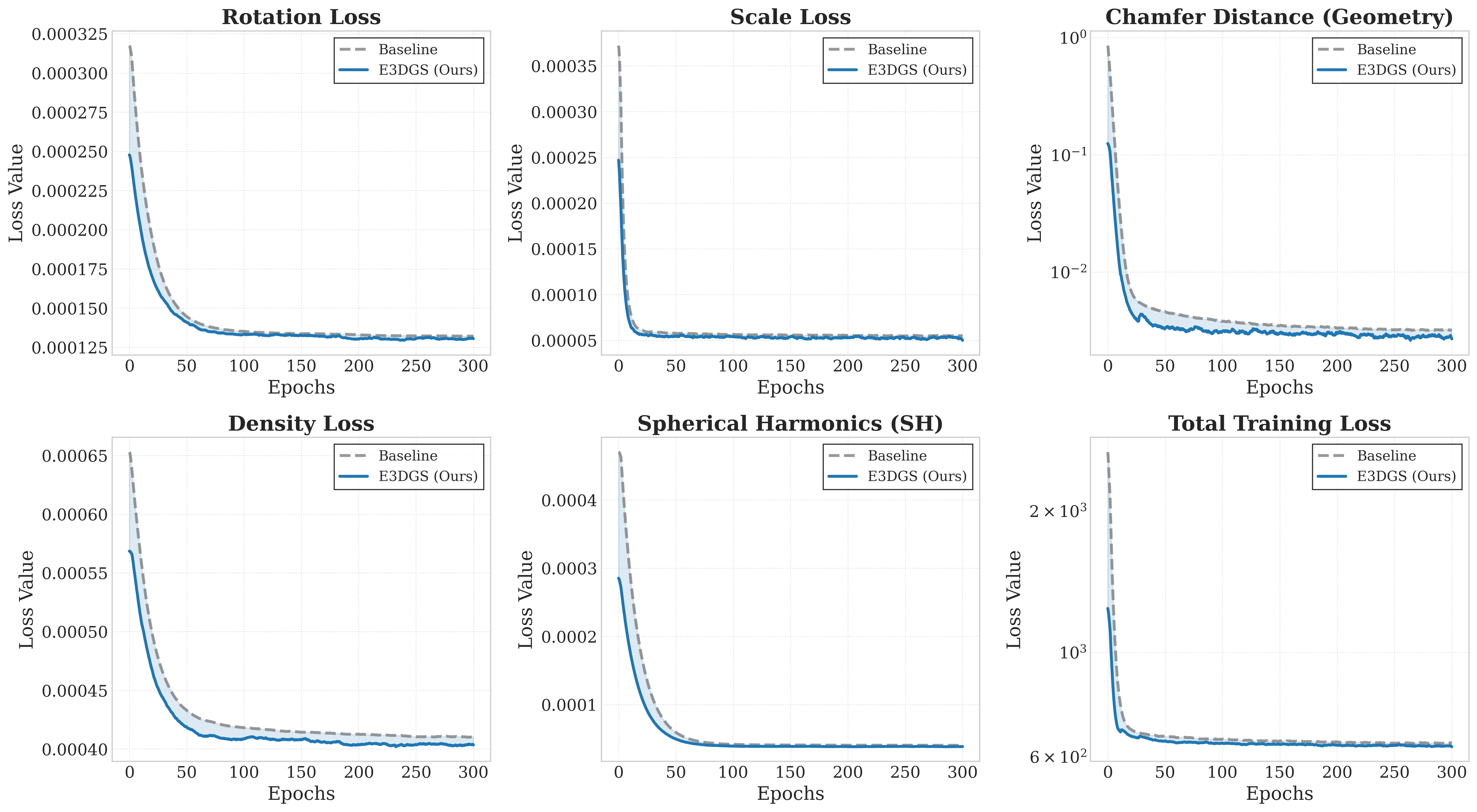}
    \caption{\textbf{Masked pretraining dynamics on 3D Gaussian splats.}
    Comparison of convergence between the baseline Gaussian-MAE~\cite{ma2024shapesplat} and \textbf{E3DGS} over 300 pretraining epochs on ShapeSplat, using Gaussian inputs with SH coefficients up to degree $2$. E3DGS shows faster early convergence and consistently lower loss across all tracked metrics, including rotation, scale, density, Chamfer distance, spherical harmonics, and total training loss.}
    \label{fig:loss_convergence}
\end{figure*}


\paragraph{Network capacity and computational cost.}
To assess whether the comparison is fair, we keep the training protocol matched across models and report both trainable parameter counts and compute statistics in Table~\ref{tab:comp_cost}.
For all entries in the table, both Gaussian-MAE and E3DGS use SH coefficients up to degree $2$, and the models are trained with the same schedule: 300 epochs for pre-training and 300 epochs for downstream fine-tuning on ModelNet40 and ShapeNetPart.

Table~\ref{tab:comp_cost} shows that E3DGS uses substantially fewer trainable parameters than the baseline across all three settings.
For example, in pre-training the trainable parameter count decreases from 29.44M to 12.05M, and similar reductions are observed for classification and part segmentation.
This reduction is achieved by scaling the latent channel width by a factor of $1/3$. The key point is that, after lifting into $\mathfrak{so}(3)$- and $\mathfrak{gl}(3)$-valued carriers, we do not treat the resulting matrix entries as independent scalar channels. Instead, we preserve their geometric matrix structure and normalize the channel budget with respect to the underlying 3D space. Accordingly, we use a width of $C/3$ so that the model capacity remains comparable to the baseline.

At the same time, E3DGS incurs higher arithmetic cost, as reflected by the larger GMAC counts.
This pattern is consistent with the nature of matrix-valued equivariant processing: while the parameter capacity is constrained, the requisite matrix contractions and bilinear operations (e.g., Killing-form evaluations) demand more floating-point operations per parameter than standard scalar baselines.

\begin{table}[t]
\centering
\small
\caption{\textbf{Network capacity and compute comparison.}
Trainable parameter counts and compute cost (GMACs) for the pre-training model and the downstream fine-tuning models (ModelNet40 classification and ShapeNetPart segmentation), measured on an NVIDIA H100 (Grace Hopper GH200) 96GB GPU. Both methods use SH$(0,1,2)$ inputs and the same training schedule.}
\label{tab:comp_cost}
\begin{tabular}{llcc}
\toprule
Task & Model & Trainable Params (M) $\downarrow$ & GMACs \\
\midrule
Pretraining & Gaussian-MAE~\cite{ma2024shapesplat} & 29.44 & 2.05 \\
Pretraining & E3DGS-MAE (Ours) & \textbf{12.05} & 15.39 \\
\midrule
ModelNet40 classification & Gaussian-MAE~\cite{ma2024shapesplat} & 22.10 & 4.79 \\
ModelNet40 classification & E3DGS-MAE (Ours) & \textbf{9.15} & 33.28 \\
\midrule
ShapeNetPart segmentation & Gaussian-MAE~\cite{ma2024shapesplat} & 26.76 & 14.25 \\
ShapeNetPart segmentation & E3DGS-MAE (Ours) & \textbf{10.46} & 42.09 \\
\bottomrule
\end{tabular}
\end{table}


\section{Extended Evaluation on Part Segmentation}
\label{app:partseg_eval}

We further evaluate the proposed representation on ShapeNet Part Segmentation to assess its effect on dense geometric prediction. Compared with object classification, part segmentation places greater emphasis on local structure and therefore provides a complementary test of whether the unified geometric--photometric carrier remains useful when the target is primarily shape-driven. Following the protocol of the baseline~\cite{ma2024shapesplat}, we report class-average mIoU (mIoUC).

\paragraph{Effect of higher-order photometry on segmentation.}
To test the representation under a more demanding photometric setting, we additionally evaluate part segmentation with full SH$(0,1,2)$ inputs, in addition to the SH$(0)$ setting.
Table~\ref{tab:partseg_robust} shows that the effect of higher-order appearance depends strongly on whether its transformation law is handled explicitly.

For Gaussian-MAE, restricting the input to SH$(0)$ yields 82.86 mIoUC on the canonical split and 73.16 under transformed evaluation. When the input is expanded to SH$(0,1,2)$, the canonical score decreases to 77.67 and the transformed score drops to 29.52. Thus, in our reproduced baseline, adding higher-order appearance does not improve segmentation and substantially weakens transformed generalization.

For E3DGS, the trend is different. With SH$(0)$ input, the model attains 80.59 mIoUC on both \textsc{ID/ID} and \textsc{$\SE(3)$/$\SE(3)$}. With SH$(0,1,2)$ input, performance changes only slightly, reaching 80.81 in both settings. Since this comparison does not isolate the individual contributions of degree-$1$ and degree-$2$ channels, we interpret it as evidence that the proposed representation can incorporate higher-order view-dependent appearance without losing pose consistency in dense prediction.

We note a slight underperformance of E3DGS on the canonical \textsc{ID/ID} split compared to the baseline's best SH$(0)$ configuration (80.81 vs. 82.86). This gap highlights a known trade-off in strict equivariant architectures: canonical datasets like ShapeNet contain implicit pose biases (e.g., objects consistently facing a canonical direction), which standard non-equivariant networks can exploit to maximize canonical accuracy.

The part-segmentation results provide a test of whether appearance is represented in a geometry-compatible way. The baseline remains competitive when appearance is restricted to SH$(0)$, where the input contains only invariant color information. Once higher-order SH channels are introduced, however, the same architecture becomes highly sensitive to pose changes. By contrast, E3DGS maintains nearly unchanged performance across transformed evaluation and does not exhibit the same degradation in the SH$(0,1,2)$ setting. These results support the view that the unified carrier is useful not only for global recognition, but also for dense geometric reasoning in the presence of view-dependent photometry.

\begin{table}[h]
\centering
\caption{\textbf{Part segmentation on ShapeNet} (class-average mIoU $\uparrow$[\%]). \textsc{ID} denotes the canonical split and \textsc{$\SE(3)$} denotes the transformed split used in evaluation. The baseline degrades substantially when higher-order SH inputs are introduced, whereas E3DGS maintains stable performance across transformed evaluation and remains robust to SH$(0,1,2)$ input.}
\label{tab:partseg_robust}
\begin{tabular*}{\linewidth}{@{\extracolsep{\fill}}llcc@{}}
\toprule
Method & Input Features & ID / ID & SE(3) / SE(3) \\ \midrule
Gaussian-MAE \cite{ma2024shapesplat} & SH (0) & 82.86 & 73.16 \\
Gaussian-MAE \cite{ma2024shapesplat} & SH (0,1,2) & 77.67 & 29.52 \\ \midrule
{E3DGS-MAE (Ours)} & SH (0) & 80.59 & 80.59 \\
{E3DGS-MAE (Ours)} & SH (0,1,2) & 80.81 & \textbf{80.81} \\ \bottomrule
\end{tabular*}
\end{table}


\section{Action-Conditioned Gaussian World Modeling}
\label{app:manigaussian_details}

We instantiate the manipulation experiments within the ManiGaussian protocol and retain its training and evaluation setup wherever possible. Our modification is focused on the Gaussian state representation and the dynamics module, which are replaced by their equivariant counterparts in the unified matrix carrier.

\subsection{Pipeline Instantiation}

Following ManiGaussian, the manipulation pipeline consists of two coupled parts: (i) a Gaussian world model that predicts the next-step scene from the current observation and robot action, and (ii) a PerceiverIO-based action decoder that predicts the end-effector action from scene features and the language instruction. In the baseline formulation, the world model comprises a representation network $q_\phi$, a Gaussian regressor $g_\phi$, a deformation predictor $p_\phi$, and a Gaussian renderer $\mathcal{R}$; the action decoder then operates on the learned scene representation together with the instruction embedding. In our adaptation, we preserve this overall decomposition. The key difference is that the Gaussian state is first lifted into the unified $\mathfrak{gl}(3)$ carrier, and the state-transition module is replaced with our equivariant dynamics model in the lifted space. The resulting equivariant features are converted to invariant token features before being passed to the PerceiverIO action head, so that the downstream action
interface remains aligned with the ManiGaussian protocol.

\subsection{Training objectives}
We follow the ManiGaussian training design and optimize a weighted sum of action prediction, current-scene reconstruction, future-scene consistency, and semantic distillation losses:
\begin{equation}
\mathcal{L}
=
\mathcal{L}_{\mathrm{Act}}
+
\lambda_{\mathrm{Geo}}\mathcal{L}_{\mathrm{Geo}}
+
\lambda_{\mathrm{Sem}}\mathcal{L}_{\mathrm{Sem}}
+
\lambda_{\mathrm{Dyna}}\mathcal{L}_{\mathrm{Dyna}}.
\end{equation}
Here, $\mathcal{L}_{\mathrm{Act}}$ denotes the behavior-cloning loss on discretized translation, rotation, gripper openness, and collision-avoidance outputs. $\mathcal{L}_{\mathrm{Geo}}$ supervises reconstruction of the current scene from the predicted Gaussian state, and $\mathcal{L}_{\mathrm{Dyna}}$ supervises reconstruction of the future scene conditioned on the current observation and robot action. When semantic supervision is used, $\mathcal{L}_{\mathrm{Sem}}$ matches the projected semantic features of the Gaussian representation to the corresponding foundation-model features.

Relative to the baseline, the loss definitions themselves are unchanged. The modification lies in how the latent Gaussian state and its action-conditioned transition are parameterized: the transition is predicted in the lifted equivariant space and then decoded back to the Gaussian state used for rendering and downstream action prediction.

\subsection{Dataset and Optimization Setup}

We use the same RLBench evaluation protocol as ManiGaussian: 10 language-conditioned manipulation tasks with 166 total task variations. Visual observations are single-front RGB-D images of resolution $128\times128$. For multiview supervision of the Gaussian world model, we use the same number of camera views as the baseline. For a controlled comparison, we keep the optimization recipe unchanged. We use the same PerceiverIO variant as the action decoder, one RTX 4090 GPU, 100k training iterations, batch size 1, the LAMB optimizer with initial learning rate $5\times10^{-4}$, and a cosine schedule with warmup.

\end{document}